\newtheorem{theorem}{Theorem}[section]
\newtheorem{proposition}[theorem]{Proposition}
\newcommand{\by}{{\bf y}}
\newcommand{\bz}{{\bf z}}
\newcommand{\bw}{{\bf w}}
\newcommand{\bu}{{\bf u}}
\newcommand{\bb}{{\bf b}}
\newcommand{\bV}{{\bf V}}
\newcommand{\bA}{{\bf A}}
\newcommand{\bX}{{\bf X}}
\newcommand{\bB}{{\bf B}}
\newcommand{\bC}{{\bf C}}
\newcommand{\bc}{{\bf c}}
\newcommand{\ord}{{\mathcal O}}
\newcommand{\R}{{\mathbb R}}
\newcommand{\Dt}{{\Delta t}}
\newcommand{\fref}[1] {Fig.~\ref{#1}}
\newcommand{\Tref}[1]{Table~\ref{#1}}
\newcommand{\E}{\EuScript{E}}
\newcommand{\bx}{{\bf x}}
\newcommand{\byli}{{\bf y}^{\ell,i}}
\newcommand{\bzli}{{\bf z}^{\ell,i}}
\newcommand{\cli}{\hat{\sigma}({\bf c}^{\ell,i})}
\newcommand{\Ali}{{\bf A}^{\ell,i}}
\newcommand{\sli}{\sigma({\bf A}^{\ell,i}_{n-1})}
\newcommand{\bwli}{{\bf w}^{\ell,i}}
\newcommand{\cLi}{\hat{\sigma}({\bf c}^{L,i})}
\newcommand{\ALi}{{\bf A}^{L,i}}
\newcommand{\bwLi}{{\bf w}^{L,i}}
\newcommand{\bD}{{\bf D}}
\newcommand{\bE}{{\bf E}}
\icmltitlerunning{UnICORNN: A recurrent model for learning \textit{very} long time dependencies}
\begin{document}

\twocolumn[
\icmltitle{UnICORNN: A recurrent model for learning \textit{very} long time dependencies}




\begin{icmlauthorlist}
\icmlauthor{T. Konstantin Rusch}{sam}
\icmlauthor{Siddhartha Mishra}{sam}
\end{icmlauthorlist}

\icmlaffiliation{sam}{Seminar for Applied Mathematics (SAM), D-MATH, ETH Z\"urich, R\"amistrasse 101, Z\"urich-8092, Switzerland}
\icmlcorrespondingauthor{T. Konstantin Rusch}{konstantin.rusch@sam.math.ethz.ch}

\icmlkeywords{RNNs, dynamical systems, long-term dependencies, invertible neural network, Hamiltonian system}

\vskip 0.3in
]



\printAffiliationsAndNotice{}  

\begin{abstract}
The design of recurrent neural networks (RNNs) to accurately process sequential inputs with long-time dependencies is very challenging on account of the exploding and vanishing gradient problem. To overcome this, we propose a novel RNN architecture which is based on a structure preserving discretization of a Hamiltonian system of second-order ordinary differential equations that models networks of oscillators. The resulting RNN is fast, invertible (in time), memory efficient and we derive rigorous bounds on the hidden state gradients to prove the mitigation of the exploding and vanishing gradient problem. A suite of experiments are presented to demonstrate that the proposed RNN provides state of the art performance on a variety of learning tasks with (very) long-time dependencies.
\end{abstract}
\section{Introduction}
Recurrent Neural Networks (RNNs) have been very successful in solving a diverse set of learning tasks involving sequential inputs \cite{DLnat}. These include text and speech recognition, time-series analysis and natural language processing. However, the well-known \emph{Exploding and Vanishing Gradient Problem} (EVGP) \cite{vanish_grad} and references therein, impedes the efficiency of RNNs on tasks that require processing (very) long sequential inputs.  The EVGP arises from the fact that the backpropagation through time (BPTT) algorithm for training RNNs entails computing products of hidden state gradients over a large number of steps and this product can either be exponentially small or large as the number of recurrent interactions increases. 

Different approaches to solve the EVGP has been suggested in recent years. These include the use of gating mechanisms, such as in LSTMs \cite{lstm} and GRUs \cite{gru}, where the additive structure of the gates mitigates the vanishing gradient problem. However, gradients might still explode, impeding the efficiency of LSTMs and GRUs on problems with very long time dependencies (LTDs) \cite{indrnn}. The EVGP can also be mitigated by constraining the structure of the recurrent weight matrices, for instance requiring them to be orthogonal or unitary \citep{orthornn,urnn,eurnn,nnRNN}. Constraining recurrent weight matrices may lead to a loss of expressivity of the resulting RNN, reducing its efficiency in handling realistic learning tasks \cite{nnRNN}. Finally, restricting weights of the RNN to lie within some prespecified bounds might lead to control over the norms of the recurrent weight matrices and alleviate the EVGP. Such an approach has been suggested in the context of \emph{independent neurons} in each layer in \cite{indrnn}, and using a coupled system of damped oscillators in \cite{coRNN}, among others. However, ensuring that weights remain within a pre-defined range during training might be difficult. Furthermore, \emph{weight clipping} could also reduce expressivity of the resulting RNN. 

In addition to EVGP, the learning of sequential tasks with very long time dependencies can require significant computational resources, for training and evaluating the RNN. Moreover, as the BPTT training algorithms entail storing all hidden states at every time step, the overall memory requirements can be prohibitive. Thus, \emph{the design of a fast and memory efficient RNN architecture that can mitigate the EVGP is highly desirable for the effective use of RNNs in realistic learning tasks with very long time dependencies.} The main objective of this article is to propose, analyze and test such an architecture. 

The basis of our proposed RNN is the observation that a large class of dynamical systems in physics and engineering, the so-called \emph{Hamiltonian systems} \cite{arn1}, allow for very precise control on the underlying states. Moreover, the fact that the phase space volume is preserved by the trajectories of a Hamiltonian system, makes such systems \emph{invertible} and allows one to significantly reduce the storage requirements. Furthermore, if the resulting hidden state gradients also evolve according to a Hamiltonian dynamical system, one can obtain precise bounds on the hidden state gradients and alleviate the EVGP. We combine and extend these ideas into an RNN architecture that will allow us to prove rigorous bounds on the hidden states and their gradients, mitigating the EVGP. Moreover, our RNN architecture results in a fast implementation that attains state of the art performance on a variety of learning tasks with very long time dependencies. 
\section{The proposed RNN}
Our proposed RNN is based on the time-discretization of the following system of \emph{second-order ordinary differential equations} (ODEs),
\begin{equation}
\label{eq:ode1}
\by^{\prime \prime} = -[\sigma\left(\bw \odot \by + \bV \bu + \bb \right) + \alpha \by].
\end{equation}
Here, $t \in [0,1]$ is the (continuous) time variable, $\bu = \bu(t) \in  \R^d$ is the time-dependent \emph{input signal}, $\by = \by (t) \in \R^m$ is the \emph{hidden state} of the RNN with $\bw \in \R^{m}$ is a weight vector, $\bV \in \R^{m \times d}$ a weight matrix,
$\bb \in \R^m$ is the bias vector and $\alpha \geq 0$ is a control parameter. The operation $\odot$ is the Hadamard product and the function $\sigma: \R \mapsto \R$ is the \emph{activation function} and is applied component wise. For the rest of this paper, we set $\sigma (u) = \tanh(u)$.

By introducing the auxiliary variable $\bz=\by^\prime$, we can rewrite the second order ODE \eqref{eq:ode1} as a first order ODE system:
\begin{equation}
\label{eq:ode}
\by^{\prime} = \bz, \quad \bz^{\prime}= -[\sigma\left(\bw \odot \by + \bV \bu + \bb \right)+\alpha \by].
\end{equation}

Assuming that $\bw_i \neq 0$, for all $1 \leq i \leq m$, it is easy to see that the ODE system \eqref{eq:ode} is a \emph{Hamiltonian system},
\begin{align}
\label{eq:ham1}
 \quad \by^{\prime} = \frac{\partial H}{\partial \bz},\quad   \bz^{\prime}= -\frac{\partial H}{\partial \by}, 
\end{align}
with the \emph{time-dependent Hamiltonian},
\begin{equation}
\label{eq:hamil}
\begin{aligned}
    H(\by,\bz,t) &= \frac{\alpha}{2}\|\by\|^2 + \frac{1}{2}\|\bz\|^2 \\
    & + \sum_{i=1}^m \frac{1}{\bw_i}\log(\cosh( \bw_i\by_i + (\bV \bu(t))_i 
    + \bb_i)),
\end{aligned}
\end{equation}
with $\|{\bf x}\|^2 = \langle {\bf x}, {\bf x} \rangle$ denoting the Euclidean norm of the vector ${\bf x} \in \R^m$ and $\langle \cdot,\cdot \rangle$ the corresponding inner product.

The next step is to find a discretization of the ODE system \eqref{eq:ode}. Given that it is highly desirable to ensure that the discretization respects the Hamiltonian structure of the underlying continuous ODE, the simplest such \emph{structure preserving discretization} is the \emph{symplectic Euler} method \cite{ss1,HLW1}. Applying the symplectic Euler method to the ODE \eqref{eq:ode} results in the following discrete dynamical system,
\begin{equation}
\label{eq:hRNN}
\begin{aligned} 
\by_n &= \by_{n-1} + \Dt \bz_n, \\
\bz_n &= \bz_{n-1} - \Dt[\sigma\left(\bw \odot \by_{n-1} + \bV \bu_n + \bb \right) + \alpha \by_{n-1}],
\end{aligned}
\end{equation}
for $1 \leq n \leq N$. Here, $0 < \Dt < 1$ is the time-step and $\bu_n \approx \bu(t_n)$, with $t_n = n \Dt$, is the input signal. It is common to initialize with $\by_0 = \bz_0 = \bf 0$. 

We see from the structure of the discrete dynamical system \eqref{eq:hRNN} that there is \emph{no interaction} between the neurons in the hidden layer of \eqref{eq:hRNN}. Such an RNN will have very limited expressivity. Hence, we \emph{stack} more hidden layers to propose the following deep or \emph{multi-layer} RNN,
\begin{align}
\label{eq:ucrn}
\begin{aligned} 
\by_n^{\ell} &= \by_{n-1}^{\ell} + \Dt\hat{\sigma}(\bc^{\ell})\odot \bz_n^{\ell}, \\
\bz_n^{\ell} &= \bz_{n-1}^{\ell} - \Dt\hat{\sigma}(\bc^\ell) \odot[\sigma(\bw^\ell \odot \by_{n-1}^\ell + \bV^\ell \by_n^{\ell-1} + \bb^l ) \\&+\alpha \by_{n-1}^{\ell}].
\end{aligned}
\end{align}
Here $\by_n^l,\bz_n^l \in \R^m$ are hidden states and $\bw^\ell,\bV^{\ell},\bb^{\ell}$ are weights and biases, corresponding to layer $\ell=1,\dots,L$. We set $\by_n^0 = \bu_n$ in the multilayer RNN \eqref{eq:ucrn}. 

In \fref{fig:diagram}, we present a schematic diagram of the proposed multi-layer recurrent model UnICORNN.
\begin{figure}[ht]
\vskip 0.2in
\begin{center}
\begin{minipage}{.35\textwidth}
\begin{tikzpicture}
    \node[anchor=south west,inner sep=0] (image) at (0,0) {\includegraphics[width=1.\textwidth]{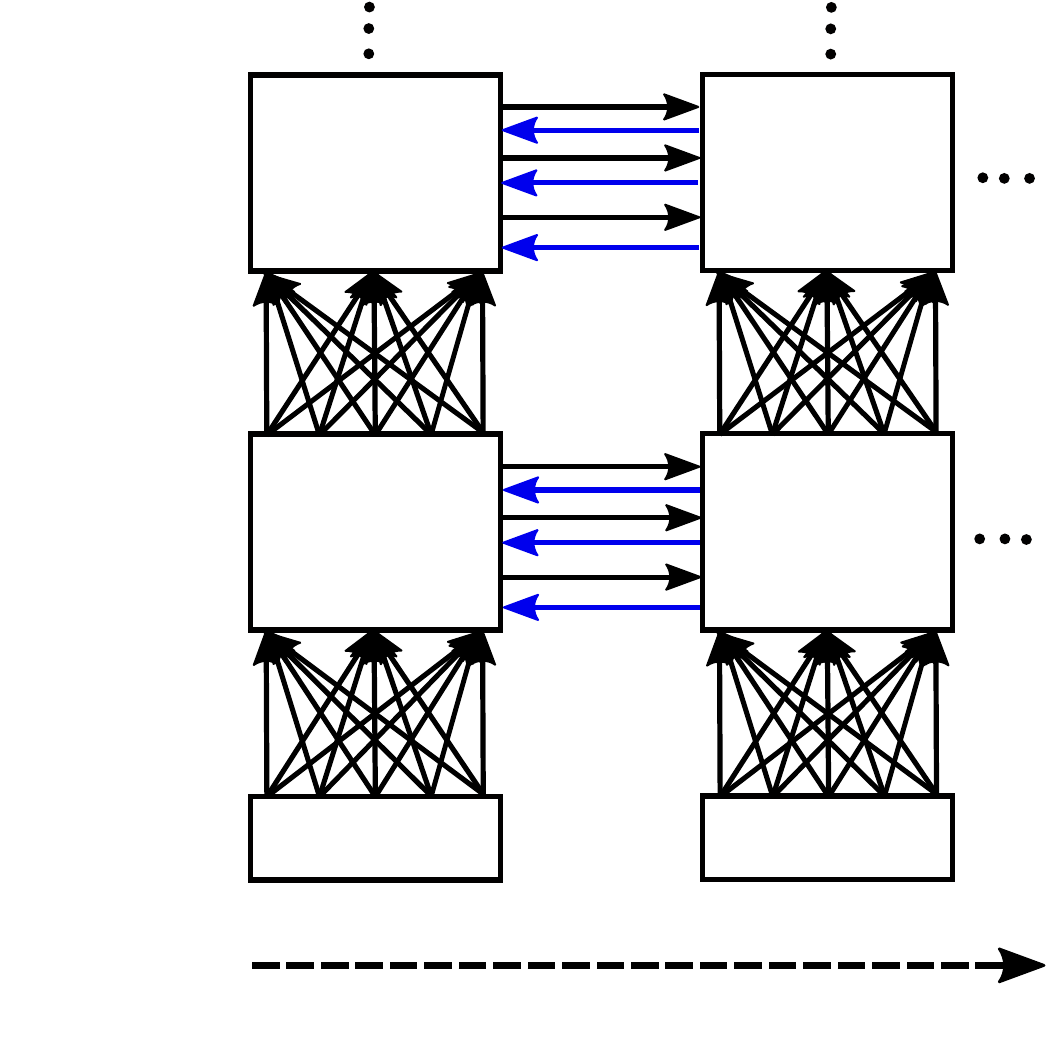}};
    \begin{scope}[x={(image.south east)},y={(image.north west)}]
        \draw (0.05, 0.19) node {Input};
        \draw (0.05, 0.49) node {Layer $l=1$};
        \draw (0.05, 0.82) node {Layer $l=2$};
        \draw (0.35, 0.115) node {$n=1$};
        \draw (0.78, 0.115) node {$n=2$};
        \draw (0.6, 0.025) node {Time $n$};
        
        \draw (0.36, 0.49) node {$[\by_1^1,\bz^1_1]^\top$};
        \draw (0.36, 0.83) node {$[\by^2_1,\bz^2_1]^\top$};

        \draw (0.785, 0.49) node {$[\by_2^1,\bz^1_2]^\top$};
        \draw (0.785, 0.83) node {$[\by^2_2,\bz^2_2]^\top$};
        
        \draw (0.355, 0.19) node {$\bu_1$};
        \draw (0.785, 0.19) node {$\bu_2$};
        
    \end{scope}
\end{tikzpicture}
\end{minipage}
\caption{Schematic diagram of the multi-layer UnICORNN architecture, where the layers (respectively the input) are densely connected and the hidden states evolve independently in time. The invertibility of UnICORNN is visualized with blue arrows, emphasizing that the hidden states can be reconstructed during the backward pass and do not need to be stored.}
\label{fig:diagram}
\end{center}
\vskip -0.2in
\end{figure}

Observe that we use the same step-size $\Dt$ for every layer, while multiplying a trainable parameter vector $\bc \in \mathbb{R}^m$ to the time step. The action of $\bc$ is modulated with the sigmoidal activation function $\hat{\sigma}(u)= 0.5 + 0.5\tanh(u/2)$, which ensures that the time-step $\Dt$ is multiplied by a value between $0$ and $1$. We remark that the presence of this trainable vector $\bc$ allows us to incorporate \emph{multi-scale behavior} in the proposed RNN, as the effective time-step is learned during training and can be significantly different from the nominal time-step $\Dt$. It is essential to point out that including this multi-scale time stepping is only possible, as each neuron (within the same hidden layer) is independent of the others and can be integrated with a different effective time step. Finally, we also share the control hyperparameter $\alpha$ across the different layers, which results in a memory unit of $L$ layers with a total of only $2$ hyperparameters. 
\subsection{Motivation and background}
The ODE system \eqref{eq:ode} is a model for a nonlinear system of uncoupled driven oscillators \cite{GHbook}. To see this, we denote $\by_i(t)$ as the displacement and $\bz_{i}(t)$ as the velocity. Then, the dynamics of the $i$-th oscillator is determined by the frequency $\alpha$ and also by the \emph{forcing} or \emph{driving} term in the second equation of \eqref{eq:ode}, where the forcing acts through the input signal $\bu$ and is modulated by the weight $\bV$ and bias $\bb$. Finally, the weight $\bw$ modulates the frequency $\alpha$ and allows each neuron to oscillate with its own frequency, rather than the common frequency $\alpha$ of the system. The structure of $\bw$ implies that each neuron is independent of the others. A key element of the oscillator system \eqref{eq:ode} is the absence of any damping or friction term. This allows the system to possess a Hamiltonian structure, with desirable long time behavior. Thus, we term the resulting RNN \eqref{eq:ucrn}, based on the ODE system \eqref{eq:ode} as \textbf{Un}damped \textbf{I}ndependent \textbf{C}ontrolled \textbf{O}scillatory \textbf{RNN} or \textbf{UnICORNN}. We remark that networks of oscillators are very common in science and engineering \cite{GHbook,stgz2} with prominent examples being pendulums in mechanics, electrical circuits in engineering, business cycles in economics and functional brain circuits such as cortical columns in neurobiology. 
\subsection{Comparison with related work.}
UnICORNN lies firmly in the class of ODE-based or ODE-inspired RNNs, which have received considerable amount of attention in the machine learning literature in recent years. Neural ODEs, first proposed in \cite{neuralODE}, are a prominent example of using ODEs to construct neural networks. In this architecture, the continuous ODE serves as the learning model and gradients are computed from a sensitivity equation, which allows one to trade accuracy with computing time. Moreover, it is argued that these neural ODEs are invertible and hence, memory efficient. However, it is unclear if a general neural ODE, without any additional structure, can be invertible. Other RNN architectures that are based on discretized ODEs include those proposed in \cite{E} and \cite{anti}, where the authors proposed an \emph{anti-symmetric} RNN, based on the discretization of a stable ODE resulting from a skew-symmetric hidden weight matrix, thus constraining the gradient dynamics.

Our proposed RNN \eqref{eq:ucrn} is inspired by two recent RNN architectures. The first one is \emph{coRNN}, proposed recently in \cite{coRNN}, where the underlying RNN architecture was also based on the use of a network of oscillators. As long as a constraint on the underlying weights was satisfied, coRNN was shown to mitigate the EVGP. In contrast to coRNN, our proposed RNN does not use a \emph{damping} term. Moreover, each neuron, for any hidden layer, in UnICORNN \eqref{eq:ucrn} is independent. This is very different from coRNN where all the neurons were coupled together. Finally, UnICORNN is a multi-layer architecture whereas coRNN used a single hidden layer. These innovations allow us to admit a Hamiltonian structure for UnICORNN and facilitate a fast and memory efficient implementation. 

Our proposed architecture was also partly inspired by \emph{IndRNN}, proposed in \cite{indrnn,deep_indrnn}, where the neurons in each hidden layers were independent of each other and interactions between neurons were mediated by stacking multiple RNN layers, with output of each hidden layer passed on to the next hidden layer, leading to a deep RNN. We clearly use this construction of independent neurons in each layer and stacking multiple layers in UnICORNN \eqref{eq:ucrn}. However in contrast to IndRNN, our proposed RNN is based on a discretized Hamiltonian system and we will not require any constraints on the weights to mitigate the EVGP.

Finally, we would like to point out that discrete Hamiltonian systems have already been used to design RNNs, for instance in \cite{hnn} and also in \cite{srnn}, where a symplectic time-integrator for a Hamiltonian system was proposed as the RNN architecture. However, these approaches are based on underlying time-independent Hamiltonians and are only relevant for mechanical systems as they cannot process time-dependent inputs, which arise in most realistic learning tasks. Moreover, as these methods enforce exact conservation of the Hamiltonian in time, they are not suitable for learning long-time dependencies, see \cite{inv_lstm} for a discussion and experiment on that issue. Although we use a Hamiltonian system as the basis of our proposed RNN \eqref{eq:ucrn}, our underlying Hamiltonian \eqref{eq:hamil} is time-dependent and the resulting RNN can readily process any time-dependent input signal. 
\subsection{On the Memory Efficiency of UnICORNN}
As mentioned in the introduction, the standard BPTT training algorithm for RNNs requires one to store all the hidden states at every time step. To see this, we observe that for a standard multi-layer RNN with $L$ layers and a mini-batch size of $b$ (for any mini-batch stochastic gradient descent algorithm), the storage (in terms of floats) scales as $\mathcal{O}(Nbd + LbmN)$, with input and hidden sequences of length $N$. This memory requirement can be very high. Note that we have ignored the storage of trainable weights and biases for the RNN in the above calculation.

On the other hand, as argued before, our proposed RNN is a symplectic Euler discretization for a Hamiltonian system. Hence, it is invertible. In fact, one can explicitly write the \emph{inverse} of UnICORNN \eqref{eq:ucrn} as,
\begin{align}
\label{eq:inv_ucrn}
\begin{aligned} 
\by_{n-1}^l &= \by_{n}^l - \Dt\hat{\sigma}(\bc^l)\odot \bz_n^l, \\
\bz_{n-1}^l &= \bz_{n}^l + \Dt\hat{\sigma}(\bc^l)\odot[\sigma(\bw^l \odot \by_{n-1}^l + \bV^\ell \by_n^{\ell-1} + \bb^l ) \\&+\alpha \by_{n-1}^l].
\end{aligned}
\end{align}
Thus, one can recover all the hidden states in a given hidden layer, only from the \emph{stored} hidden state at the final time step, for that layer. Moreover, only the input signal needs to be stored as the other hidden states can be reconstructed from the formula \eqref{eq:inv_ucrn}. Hence, a straightforward calculation shows that the storage for UnICORNN scales as $\mathcal{O}(Nbd + Lbm)$. As $L << N$, we conclude that UnICORNN allows for a significant saving in terms of storage, when compared to a standard RNN. 
\section{Rigorous Analysis of UnICORNN}
\paragraph{On the dynamics of the hidden state gradients for ODE \eqref{eq:ode}.}
In order to investigate the EVGP for the proposed RNN \eqref{eq:ucrn}, we will first explore the dynamics of the gradients of hidden states $\by,\bz$ (solutions of the ODE \eqref{eq:ode}) with respect to the trainable parameters $\bw,\bV$ and $\bb$. Denote any scalar parameter as $\theta$ and $f_\theta = \frac{\partial f}{\partial \theta}$, then differentiating the ODE \eqref{eq:ode} with respect to $\theta$ results in the ODE,
\begin{align}
    \label{eq:hsg}
    \begin{aligned}
    \by_{\theta}^{\prime} &= \bz_{\theta}, \\ \bz_{\theta}^{\prime} &= -\sigma^{\prime}(\bA) \odot \left(\bw \odot \by_{\theta}\right) -\alpha \by_{\theta}-\sigma^{\prime}(\bA) \odot\bC(t),
\end{aligned}
\end{align}
where $\bA=\bw \odot \by + \bV \bu + \bb$ is the pre-activation and the coefficient $\bC \in \R^m$ is given by $\bC_i = \by_i$ if $\theta = \bw_i$,  $\bC_i = \bu_{j}$ if $\theta = \bV_{ij}$ and $\bC_i = 1$ if $\theta = \bb_i$, with all other entries of the vector $\bC$ being zero.

It is easy to check that the ODE system \eqref{eq:hsg} is a \emph{Hamiltonian system} of form \eqref{eq:ham1}, with the following time-dependent Hamiltonian;
\begin{equation}
    \label{eq:hsgHam}
    \begin{aligned}
    &{\bf H}\left(\by_{\theta},\bz_{\theta},t\right):= \frac{\alpha}{2}\|\by_{\theta}\|^2 + \frac{1}{2}\|\bz_{\theta}\|^2
    \\&+ \frac{1}{2}\sum\limits_{i=1}^m \sigma^{\prime}(\bA_i)\bw_i ((\by_{\theta})_i)^2 
    + \sum\limits_{i=1}^m \sigma^{\prime}(\bA_i) \bC_i(t)(\by_{\theta})_i.
    \end{aligned}
    \end{equation}
Thus, by the well-known Liouville's theorem \cite{ss1}, we know that the phase space volume of \eqref{eq:hsg} is preserved. Hence, this system cannot have any asymptotically stable fixed points. This implies that $\left\{\bf 0, \bf 0\right\}$ cannot be a stable fixed point for the hidden state gradients $\left(\by_{\theta},\bz_{\theta}\right)$. Thus, we can expect that the hidden state gradients with respect to the system of oscillators \eqref{eq:ode} do not remain near zero and suggest a possible mechanism for the mitigation of the vanishing gradient problem for UnICORNN \eqref{eq:ucrn}, which is a structure preserving discretization of the ODE \eqref{eq:ode}.

\paragraph{On the Exploding Gradient Problem for UnICORNN.} We train the RNN \eqref{eq:ucrn} to minimize the loss function,
\begin{equation}
\label{eq:lf1}
\E := \frac{1}{N}\sum\limits_{n=1}^N \E_n, \quad \E_n = \frac{1}{2} \|\by^L_n - \bar{\by}_n\|_2^2,
\end{equation}
with $\bar{\by}$ being the underlying ground truth (training data). Note that the loss function \eqref{eq:lf1} only involves the output at the last hidden layer (we set the affine output layer to identity for the sake of simplicity). During training, we compute gradients of the loss function \eqref{eq:lf1} with respect to the trainable weights and biases $\bf \Theta = [\bw^\ell,\bV^{\ell}, \bb^\ell,\bc^\ell]$, for all $1 \leq \ell \leq L$, i.e.,
\begin{equation}
\label{eq:grad1}
\frac{\partial \E}{\partial \theta} = \frac{1}{N}\sum_{n=1}^N \frac{\partial \E_n}{\partial \theta}, \quad \forall ~\theta \in {\bf \Theta}.
\end{equation} 
We have the following upper bound on the hidden state gradient,
\begin{proposition}
\label{prop:3}
Let the time step $\Dt << 1$ be sufficiently small in the RNN \eqref{eq:ucrn} and let $\by^\ell_n,\bz^\ell_n$, for $1 \leq \ell \leq L$, and $1 \leq n \leq N$ be the hidden states generated by the RNN \eqref{eq:ucrn}. Then, the gradient of the loss function $\E$ \eqref{eq:lf1} with respect to any parameter $\theta \in {\bf \Theta}$ is bounded as,
\begin{equation}
    \label{eq:gbd}
    \left|\frac{\partial \E}{\partial \theta}\right| \leq \frac{1-(\Dt)^L}{1-\Dt}T (1+2\gamma T) \overline{\bV}(\overline{Y} + {\bf F}){\bf \Delta},
\end{equation}
with $\bar{Y} = \max\limits_{1 \leq n \leq N} \|\bar{\by}_n\|_{\infty}$, be a bound on the underlying training data and other quantities in \eqref{eq:gbd} defined as,
\begin{equation*}
    \begin{aligned}
\gamma &= \max\left(2,\|\bw^{L}\|_{\infty}+\alpha\right) + \frac{\left(\max\left(2,\|\bw^{L}\|_{\infty}+\alpha\right)\right)^2}{2}, \\
    \overline{\bV} &= \prod\limits_{q=1}^L \max\{1, \|\bV^q\|_{\infty}\}, \quad \beta=\max\{1+2\alpha,4\alpha^2\}\\
    {\bf F} &= \sqrt{\frac{2}{\alpha}\left(1+2\beta T\right)}, \quad T = N\Dt, \\
    {\bf \Delta} &= 2 + \sqrt{2\left(1+2\beta T\right)} + (2+\alpha) \sqrt{\frac{2}{\alpha}\left(1+2\beta T\right)}.
    \end{aligned}
\end{equation*}
\end{proposition}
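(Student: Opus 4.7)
The plan is to bound $|\partial \E/\partial \theta|$ by expanding it through the chain rule into three factors---the output residual $\by^L_n-\bar\by_n$, a mixed cross-layer and cross-time Jacobian connecting the hidden state that contains $\theta$ to $(\by^L_n,\bz^L_n)$, and the immediate sensitivity of that hidden state to $\theta$---and then estimating each factor separately.

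First I would derive a priori bounds on the hidden states produced by \eqref{eq:ucrn}. Because $|\sigma|\le 1$ and the symplectic Euler step perturbs the Hamiltonian \eqref{eq:hamil} only to $O(\Dt)$, the discrete energy $E^\ell_n = \tfrac{\alpha}{2}\|\by^\ell_n\|^2 + \tfrac{1}{2}\|\bz^\ell_n\|^2$ should satisfy an inequality of the form $E^\ell_n \le E^\ell_{n-1} + c\Dt(1+\sqrt{E^\ell_{n-1}})$ obtainable by expanding $E^\ell_n-E^\ell_{n-1}$ and applying Cauchy--Schwarz and $\|\hat\sigma\|_\infty \le 1$. A discrete Gr\"onwall step yields $E^\ell_n\le 1+2\beta T$, hence $\|\by^\ell_n\|_\infty \le \sqrt{(2/\alpha)(1+2\beta T)}={\bf F}$ and an analogous bound on $\|\bz^\ell_n\|_\infty$. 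Combined with $\|\bar\by_n\|_\infty\le \overline{Y}$, this immediately controls the residual by $\overline{Y}+{\bf F}$.

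Second, for $\theta$ a scalar parameter residing in some layer $k$, I would write
\[
\frac{\partial \E_n}{\partial \theta}=(\by^L_n-\bar\by_n)^\top \sum_{j=1}^n \frac{\partial\by^L_n}{\partial(\by^k_j,\bz^k_j)}\,\frac{\partial(\by^k_j,\bz^k_j)}{\partial \theta},
\]
and factor the middle Jacobian into cross-layer and within-layer pieces. Inspection of \eqref{eq:ucrn} reveals that $\by^{\ell-1}_n$ enters the $\ell$-th layer only through the $\Dt$-scaled forcing in the $\bz^\ell_n$-update, so each cross-layer Jacobian $\partial(\by^\ell_n,\bz^\ell_n)/\partial \by^{\ell-1}_n$ carries an explicit prefactor $\Dt\|\bV^\ell\|_\infty$. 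Bounding the product of the $L-k$ crossings uniformly in $k$ by the geometric sum $\sum_{j=0}^{L-1}(\Dt)^j=(1-(\Dt)^L)/(1-\Dt)$, together with $\overline{\bV}=\prod_q \max\{1,\|\bV^q\|_\infty\}$, produces the two leading factors in \eqref{eq:gbd}.

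Third, the within-layer ingredients are controlled using the Hamiltonian structure of the sensitivity system \eqref{eq:hsg}--\eqref{eq:hsgHam}. I would show that the symplectic Euler discretization of \eqref{eq:hsg} admits an energy estimate analogous to step one, yielding a bound on $\|(\by^k_{j,\theta},\bz^k_{j,\theta})\|$ whose structure matches ${\bf \Delta}$ (the $\sqrt{2(1+2\beta T)}$ and ${\bf F}$-dependent pieces enter because the forcing vector $\bC$ in \eqref{eq:hsg} has entries $\by_i$, $\bu_i$ or $1$, all of which are already controlled by the forward bound ${\bf F}$). A Duhamel/telescoping argument for the within-layer time Jacobian $\partial\by^k_n/\partial\by^k_j$, together with Lipschitz constants of the right-hand side of \eqref{eq:ucrn} that yield $\gamma$ in terms of $\|\bw^L\|_\infty+\alpha$, gives a per-pair bound of order $1+2\gamma T$. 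Summing the $N$ inner contributions and using $N\Dt=T$ then produces the remaining factor $T(1+2\gamma T)$, and averaging over $n$ in \eqref{eq:grad1} preserves the overall bound.

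The delicate step is the one preventing exponential blow-up in time: a direct multiplicative chain-rule estimate of $\partial\by^k_n/\partial\by^k_j$ would inflate to $e^{O(\gamma T)}$, which would nullify the bound on the long sequences that motivate UnICORNN. Avoiding this requires exploiting the volume-preserving (symplectic) character of the discretized sensitivity recurrence---essentially the discrete analogue of Liouville's theorem applied to \eqref{eq:hsg}---rather than triangle-inequality bookkeeping. The main work therefore lies in deciding which terms to absorb into a discrete Hamiltonian energy inequality and which to treat by telescoping; once those a priori bounds are in hand, the remaining assembly into \eqref{eq:gbd} is routine.
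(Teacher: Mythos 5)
Your overall architecture --- forward energy bounds giving ${\bf F}$, the three-factor chain-rule decomposition, cross-layer Jacobians each contributing a factor $\Dt\|\bV^q\|_{\infty}$ that sums geometrically to $(1-\Dt^L)/(1-\Dt)$, and a final summation over time contributing $T$ --- coincides with the paper's proof, and your first two steps (the discrete Gr\"onwall argument for $H_n=\alpha(\byli_n)^2+(\bzli_n)^2$ and the cross-layer bookkeeping) are exactly what the paper does. Two of your ingredients, however, are not what the paper uses, and one of them would fail as stated. The minor one: ${\bf \Delta}$ does not arise from an energy estimate for a discretization of the sensitivity system \eqref{eq:hsg}; it is simply a bound on the \emph{immediate} partial derivative $\frac{\partial^{+}\bX^\ell_k}{\partial\theta}$ (a single update step differentiated with respect to $\theta$ while holding the incoming states fixed), obtained by explicit case analysis over $\theta\in\{\bw,\bV,\bb,\bc\}$ together with the forward bounds on $\by^\ell_k,\bz^\ell_k$; no Duhamel or telescoping argument is needed there.

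The step that would fail is your proposed treatment of the within-layer product $\prod_{j=k+1}^{n}\frac{\partial\bX^L_j}{\partial\bX^L_{j-1}}$ via the ``volume-preserving (symplectic) character'' of the recurrence, i.e., a discrete Liouville theorem. Volume preservation controls the determinant of the Jacobian, not its operator norm: a unit-determinant matrix such as $\mathrm{diag}(M,1/M)$ has arbitrarily large norm, and products of symplectic maps can exhibit exponential norm growth (hyperbolic dynamics are fully compatible with Liouville's theorem --- which is why the paper invokes Liouville only heuristically against \emph{vanishing} gradients, never to prevent explosion). The paper's actual argument is precisely the triangle-inequality bookkeeping you set out to avoid: each $2\times 2$ block $\bB^{L,i}_j$ of the recurrent Jacobian satisfies $\|\bB^{L,i}_j\|_{\infty}\le 1+\gamma\Dt$, so the product is bounded by $(1+\gamma\Dt)^{n-k}$, which is then replaced by $1+2\gamma(n-k)\Dt\le 1+2\gamma T$ under the hypothesis that $\Dt$ is sufficiently small. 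The $e^{\ord(\gamma T)}$ inflation you worry about is therefore not eliminated by structure; it is linearized away by the standing smallness assumption (implicitly $\gamma T=\ord(1)$), and the factor $1+2\gamma T$ in \eqref{eq:gbd} is exactly that linearization. Unless you replace your Liouville step by this (or some other genuine operator-norm estimate), your proof does not close.
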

This proposition, proved in Appendix {\bf C.2}, demonstrates that as long as the weights $\bw^L,\bV^q$ are bounded, there is a uniform bound on the hidden state gradients. This bound grows at most as $(N\Dt)^3$, with $N$ being the total number of time steps. Thus, there is no exponential growth of the gradient with respect to the number of time steps and the \emph{exploding gradient problem} is mitigated for UnICORNN.
\paragraph{On the Vanishing Gradient Problem for UnICORNN.}
By applying the chain rule repeatedly to each term on the right-hand-side of \eqref{eq:grad1}, we obtain 
\begin{equation}
\label{eq:grad2}
\begin{aligned}
\frac{\partial \E_n}{\partial \theta} &=\sum\limits_{\ell=1}^L\sum\limits_{k=1}^n \frac{\partial \E^{(n,L)}_{k,\ell}}{\partial \theta},~\frac{\partial \E^{(n,L)}_{k,\ell}}{\partial \theta}:= \frac{\partial \E_n}{\partial \bX^L_n} \frac{\partial \bX^L_n}{\partial \bX^\ell_k} \frac{\partial^{+} \bX^\ell_k}{\partial \theta}, \\
\bX^\ell_n &= \left[\by^{\ell,1}_n,\bz^{\ell,1}_n,\ldots,\by^{\ell,j}_n,\bz^{\ell,j}_n,\ldots,\by^{\ell,m}_n,\bz^{\ell,m}_n  \right].
\end{aligned}
\end{equation}
Here, the notation $\frac{\partial^{+} \bX^\ell_k}{\partial \theta}$ refers to taking the partial derivative of $\bX^\ell_k$ with respect to the parameter $\theta$, while keeping the other arguments constant.  The quantity $\frac{\partial \E^{(n,L)}_{k,\ell}}{\partial \theta}$ denotes the contribution from the $k$-recurrent step at the $l$-th hidden layer of the deep RNN \eqref{eq:ucrn} to the overall hidden state gradient at the step $n$. The vanishing gradient problem \citep{vanish_grad} arises if $\left | \frac{\partial \E^{(n,L)}_{k,\ell}}{\partial \theta} \right |$, defined in \eqref{eq:grad2}, $\rightarrow 0$ exponentially fast in $k$, for $k << n$ (long-term dependencies). In that case, the RNN does not have long-term memory, as the contribution of the $k$-th hidden state at the $\ell$-th layer to error at time step $t_n$ is infinitesimally small.

We have established that the hidden state gradients for the underlying continuous ODE \eqref{eq:ode} do not vanish. As we use a symplectic Euler discretization, the phase space volume for the discrete dynamical system \eqref{eq:hRNN} is also conserved \cite{ss1,HLW1}. Hence, one can expect that the gradients of the multilayer RNN \eqref{eq:ucrn} do not vanish. However, these heuristic considerations need to be formalized. Observe that the vanishing gradient problem for RNNs focuses on the possible smallness of contributions of the gradient over a large number of recurrent steps. As this behavior of the gradient is independent of the number of layers, we focus on the vanishing gradient problem for a single hidden layer here, while presenting the multilayer results in Appendix {\bf C.4}. Also, for the sake of definiteness, we set the scalar parameter $\theta = \bw^{1,p}$ for some $1 \leq p \leq m$. Similar results also hold for any other $\theta \in {\bf \Theta}$. 

We have the following representation formula (proved in Appendix {\bf C.3}) for the hidden state gradients,
\begin{proposition}
\label{prop:4}
Let $\by_n$ be the hidden states generated by the RNN \eqref{eq:ucrn}.  Then the gradient for long-term dependencies, i.e. $k << n$, satisfies the representation formula,
\begin{equation}
     \label{eq:glb}
     \begin{aligned}
      \frac{\partial \E^{(n,1)}_{k,1}}{\partial \bw^{1,p}} &=
      -\Dt\hat{\sigma}(\bc^{1,p})^2t_n\sigma^{\prime}(\bA^{1,p}_{k-1})\by^{1,p}_{k-1}\left(\by^{1,p}_n-\overline{\by}^p_n\right) \\&+ \ord(\Dt^2).
     \end{aligned}
 \end{equation}
\end{proposition}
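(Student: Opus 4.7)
The plan is to apply the chain-rule decomposition \eqref{eq:grad2} specialised to $L=1$. Because the neurons within a single UnICORNN layer are decoupled and $\bw^{1,p}$ enters only the $p$-th neuron's update, everything reduces to a two-dimensional computation in $[\by^{1,p}_j,\bz^{1,p}_j]$. The chain-rule product has three factors: the loss derivative $\partial\E_n/\partial\by^{1,p}_n = \by^{1,p}_n - \bar{\by}^p_n$ (all other slots vanish); the $2\times2$ per-neuron sensitivity matrix $\partial[\by^{1,p}_n,\bz^{1,p}_n]/\partial[\by^{1,p}_k,\bz^{1,p}_k]$; and the direct partial $\partial^{+}[\by^{1,p}_k,\bz^{1,p}_k]/\partial\bw^{1,p}$. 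Only the $(1,2)$ entry of the sensitivity matrix and the $\bz$-slot of the direct partial will survive at leading order in $\Dt$.

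First I compute the direct partial. Differentiating the symplectic step \eqref{eq:ucrn} at $k$ with $[\by^{1,p}_{k-1},\bz^{1,p}_{k-1}]$ frozen gives
$$\frac{\partial^{+}\bz^{1,p}_k}{\partial\bw^{1,p}} = -\Dt\,\hat{\sigma}(\bc^{1,p})\,\sigma'(\bA^{1,p}_{k-1})\by^{1,p}_{k-1},\qquad \frac{\partial^{+}\by^{1,p}_k}{\partial\bw^{1,p}} = \Dt\,\hat{\sigma}(\bc^{1,p})\,\frac{\partial^{+}\bz^{1,p}_k}{\partial\bw^{1,p}} = \ord(\Dt^2),$$
so the $\by$-slot is already absorbed in the $\ord(\Dt^2)$ remainder. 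Combined with the loss derivative this already produces the prefactor $-\Dt\,\hat{\sigma}(\bc^{1,p})\sigma'(\bA^{1,p}_{k-1})\by^{1,p}_{k-1}(\by^{1,p}_n-\bar{\by}^p_n)$ of \eqref{eq:glb}; what remains is to identify the $(1,2)$ entry of the sensitivity matrix as $t_n\,\hat{\sigma}(\bc^{1,p})$ at leading order.

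For this I would introduce the backward adjoints $\mu_j := \partial\by^{1,p}_n/\partial\by^{1,p}_j$ and $\nu_j := \partial\by^{1,p}_n/\partial\bz^{1,p}_j$, with terminal conditions $\mu_n=1,\,\nu_n=0$, satisfying
$$\nu_j = \nu_{j+1} + \Dt\,\hat{\sigma}(\bc^{1,p})\,\mu_{j+1},\qquad \mu_j = \mu_{j+1}\bigl(1-\Dt^2\hat{\sigma}(\bc^{1,p})^2 g_{j+1}\bigr) - \Dt\,\hat{\sigma}(\bc^{1,p})\,g_{j+1}\,\nu_{j+1},$$
with shorthand $g_{j+1} = \sigma'(\bA^{1,p}_j)\bw^{1,p}+\alpha$. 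Telescoping the $\nu$-equation yields $\nu_k = \Dt\,\hat{\sigma}(\bc^{1,p})\sum_{l=k+1}^n \mu_l$. A careful bookkeeping on the $\mu$-equation (whose driving term is $\ord(\Dt)$) shows that substituting $\mu_l \approx 1$ is admissible once paired with the $\ord(\Dt)$ direct derivative, so $\nu_k \approx (n-k)\Dt\,\hat{\sigma}(\bc^{1,p})$; in the long-term regime $k\ll n$ this simplifies to $t_n\,\hat{\sigma}(\bc^{1,p})$. Assembling the three factors then produces \eqref{eq:glb}.

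The main obstacle is the remainder control in the adjoint recursions. Each per-step Jacobian is $I+\ord(\Dt)$ and symplectic, yet there are $n-k=\ord(1/\Dt)$ of them, so a naive Gronwall bound yields only $\ord(1)$ control on the product. What saves the argument is the off-diagonal structure of the leading generator $L_j$ of the per-step Jacobian: a direct computation shows $(L_iL_j)_{12}=0$, so the next correction to the $(1,2)$ entry enters only at third order in the product expansion and, once paired with the $\ord(\Dt)$ direct derivative, contributes at $\ord(\Dt^2)$. Together with the replacement $(n-k)\Dt\to t_n$ at an error $t_k=k\Dt=o(t_n)$ justified by $k\ll n$, this delivers the claimed remainder. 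Exploiting the Hamiltonian/symplectic structure \eqref{eq:ham1}--\eqref{eq:hamil} of UnICORNN is thus essential.
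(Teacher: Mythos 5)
Your proposal is correct and follows essentially the same route as the paper's proof in Appendix {\bf C.3}: the same three-factor chain-rule splitting, the same direct partials, and your backward adjoint recursion for $(\mu_j,\nu_j)$ is just a row-wise rewriting of the paper's product of $2\times 2$ block Jacobians, with the telescoped $\nu_k=\Dt\,\hat{\sigma}(\bc^{1,p})\sum_{l>k}\mu_l\approx (n-k)\Dt\,\hat{\sigma}(\bc^{1,p})\approx t_n\hat{\sigma}(\bc^{1,p})$ reproducing the $(1,2)$ entry of the matrix $\bC^{p}$ in \eqref{eq:p53}. The one place you overclaim is the remainder control: the third-order correction to the $(1,2)$ entry of the product of $n-k=\ord(1/\Dt)$ near-identity Jacobians is $\ord(t_n^3)=\ord(1)$ rather than $\ord(\Dt)$, so after pairing with the $\ord(\Dt)$ direct derivative it enters at $\ord(\Dt)$, not $\ord(\Dt^2)$ --- but the paper's own induction step \eqref{eq:p47} makes precisely the same leap before reinterpreting the $\ord(1)$ entries in \eqref{eq:p50}--\eqref{eq:p51}, so your argument is at the same level of rigor as the original.
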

It is clear from the representation formula \eqref{eq:glb} that there is no $k$-dependence for the gradient. In particular, as long as all the weights are of $\ord(1)$, the leading-order term in \eqref{eq:glb} is $\ord(\Dt)$. Hence, the gradient can be small but is independent of the recurrent step $k$. Thus, we claim that the \emph{vanishing gradient problem}, with respect to recurrent connections, is mitigated for UnICORNN \eqref{eq:ucrn}.  
\section{Experiments}
The details of the training procedure
for each experiment can be found in Appendix {\bf A}. Code to replicate the experiments can be found at
\href{https://github.com/tk-rusch/unicornn}{\textbf{https://github.com/tk-rusch/unicornn}}.
\paragraph{Implementation}
The structure of UnICORNN \eqref{eq:ucrn} enables us to achieve a very fast implementation. First, the transformation of the input (i.e. $\bV^\ell \by_n^{\ell-1}$ for all $l=1,\dots,L$), which is the most computationally expensive part of UnICORNN, does not have a sequential structure and can thus be computed in parallel over time. Second, as the underlying ODEs of the UnICORNN are uncoupled for each neuron, the remaining recurrent part of UnICORNN is solved independently for each component. Hence, inspired by the implementation of Simple Recurrent Units (SRU) \citep{sru} and IndRNN, we present in Appendix {\bf B}, the details of an efficient CUDA implementation, where the input transformation is computed in parallel and the dynamical system corresponding to each component of \eqref{eq:ucrn} is an independent CUDA thread. 

We benchmark the training speed of UnICORNN with $L=2$ layers, against the fastest available RNN implementations, namely the cuDNN implementation \citep{cudnn_lstm} of LSTM (with 1 hidden layer), SRU and IndRNN (both with $L=2$ layers and with batch normalization). \fref{fig:speed} shows the computational time (measured on a GeForce RTX 2080 Ti GPU) of the combined forward and backward pass for each network, averaged over $100$ batches with each of size $128$, for two different sequence lengths, i.e. $N=1000,2000$. We can see that while the cuDNN LSTM is relatively slow, the SRU, IndRNN and the UnICORNN perform similarly fast. Moreover, we also observe that UnICORNN is about $30-40$ times faster per combined forward and backward pass, when compared to recently developed RNNs such as expRNN \cite{exprnn} and coRNN \cite{coRNN}. We thus conclude that the UnICORNN is among the fastest available RNN architectures.

\begin{figure}[ht]
\vskip 0.2in
\begin{center}
\centerline{\includegraphics[width=\columnwidth]{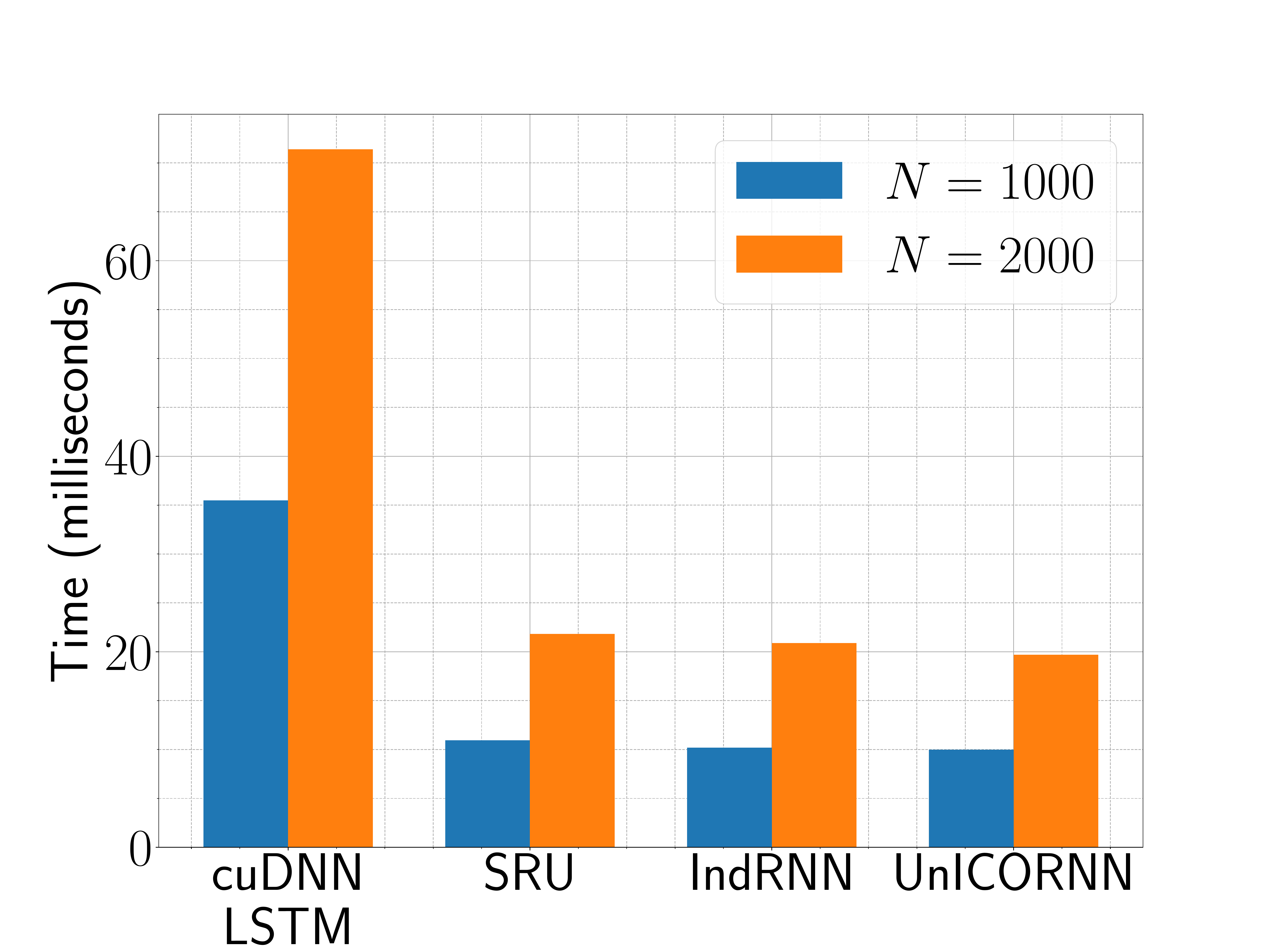}}
\caption{Measured computing time for the combined forward and backward pass for the UnICORNN as well as for three of the fastest available RNN implementations.}
\label{fig:speed}
\end{center}
\vskip -0.2in
\end{figure}

\paragraph{Permuted sequential MNIST}
A well-established benchmark for testing RNNs on input sequences with long-time dependencies is the permuted sequential MNIST (psMNIST) task \citep{seq_mnist}. Based on the classical MNIST data set \citep{mnist}, the flattened grey-scale matrices are randomly permuted (based on a fixed random permutation) and processed sequentially by the RNN. This makes the learning task more challenging than sequential MNIST, where one only flattens the MNIST matrices without permuting them. In order to make different methods comparable, we use the same fixed seed for the random permutation, as in \cite{exprnn,dtriv,scornn}. \Tref{tab:psmnist} shows the results for UnICORNN with $3$ layers, together with other recently proposed RNNs, which were explicitly designed to learn LTDs as well as two gated baselines. We see that UnICORNN clearly outperforms the other methods.
\begin{table}[ht]
\caption{Test accuracies on permuted sequential MNIST together with number of hidden units as well as total number of parameters $M$ for each network. All other results are taken from the corresponding original publication, cited in the main text, except that we are using the results of \cite{GRU_results} for GRU and of \cite{scornn} for LSTM.}
\label{tab:psmnist}
\vskip 0.15in
\begin{center}
\begin{small}
\begin{sc}
\begin{tabular}{llll}
\toprule
Model     & test acc. & \#units & $M$ \\
\midrule
LSTM & 92.9\% & 256 & 270k\\
GRU & 94.1\% & 256 & 200k\\
expRNN & 96.6\%  & 512 & 127k\\
coRNN  & 97.3\% & 256 & 134k \\
IndRNN ($L$=6)  & 96.0\% & 128 & 86k\\
dense-IndRNN ($L$=6) & 97.2\% & 128 & 257k \\
\textbf{UnICORNN} ($L$=3) & 97.8\% & 128 & 35k\\
\textbf{UnICORNN} ($L$=3) & \textbf{98.4}\% & 256 & 135k\\
\bottomrule
\end{tabular}
\end{sc}
\end{small}
\end{center}
\vskip -0.1in
\end{table}

\paragraph{Noise padded CIFAR-10} 
A more challenging test for the ability of RNNs to learn LTDs is provided by the recently proposed noise padded CIFAR-10 experiment \citep{anti}. In it, the CIFAR-10 data points \citep{cifar} are fed to the RNN row-wise and flattened along the channels resulting in sequences of length 32. To test long term memory, entries of uniform random numbers are added such that the resulting sequences have a length of 1000, i.e. the last 968 entries of each sequences are only noise to distract the RNNs. \Tref{tab:cifar} shows the result of the UnICORNN with $3$ layers together with the results of other recently proposed RNNs, namely for the LSTM, anti.sym. RNN and gated anti.sym. RNN \citep{anti}, Lipschitz RNN \citep{lip_rnn}, Incremental RNN \citep{inc_rnn}, FastRNN \citep{fastrnn} and coRNN \cite{coRNN}. We conclude that the proposed RNN readily outperforms all other methods on this experiment.
\begin{table}[ht]
\caption{Test accuracies on noise padded CIFAR-10 together with number of hidden units as well as total number of parameters $M$ for each network. All other results are taken from literature, specified in the main text.}
\label{tab:cifar}
\vskip 0.15in
\begin{center}
\begin{small}
\begin{sc}
\begin{tabular}{llll}
\toprule
Model     & test acc. & \#units & $M$ \\
\midrule
LSTM & 11.6\% & 128 & 64k\\
Incremental RNN& 54.5\% & 128 & 12k\\
Lipschitz RNN & 55.8\%  & 256 & 158k\\
FastRNN & 45.8\% & 128 & 16k\\
anti.sym. RNN & 48.3\% & 256 & 36k\\
gated anti.sym. RNN & 54.7\% & 256 & 37k \\
coRNN & 59.0\% & 128 & 46k \\
\textbf{UnICORNN} ($L$=3) & \textbf{62.4}\% & 128 & 47k\\
\bottomrule
\end{tabular}
\end{sc}
\end{small}
\end{center}
\vskip -0.1in
\end{table}

\paragraph{EigenWorms}
The EigenWorms data set \cite{eigenworms} is a collecting of 259 very long sequences, i.e. length of 17984, describing the motion of a worm. The task is, based on the 6-dimensional motion sequences, to classify a worm as either wild-type or one of four mutant types. We use the same train/valid/test split as in \cite{log_ode}, i.e. $70\%$/$15\%$/$15\%$. As the length of the input sequences is extremely long for this test case, we benchmark UnICORNN against three sub-sampling based baselines. These include the results of \cite{log_ode}, which is based on signature sub-sampling routine for neural controlled differential equations. Additionally after a hyperparameter fine-tuning procedure, we perform a random sub-sampling as well as truncated back-propagation through time (BPTT) routine using LSTMs, where the random sub-sampling is based on $200$ randomly selected time points of the sequences as well as the BPTT is truncated after the last $500$ time points of the sequences. Furthermore, we compare UnICORNN with three leading RNN architectures for solving LTD tasks, namely expRNN, IndRNN and coRNN, which are all applied to the full-length sequences. The results, presented in \Tref{tab:worms}, show that while sub-sampling approaches yield moderate test accuracies, expRNN as well as the IndRNN yield very poor accuracies. In contrast, coRNN performs very well. However, the best results are obtained for UnICORNN as it reaches a test accuracy of more than $90\%$, while at the same time yielding a relatively low standard deviation, further underlining the robustness of the proposed RNN.
\begin{table}[ht]
\caption{Test accuracies on EigenWorms using $5$ re-trainings of each best performing network (based on the validation set) together with number of hidden units as well as total number of parameters $M$ for each network.}
\label{tab:worms}
\vskip 0.15in
\begin{center}
\begin{small}
\begin{sc}
\begin{tabular}{llll}
\toprule
Model     & test acc. & \#units & $M$ \\
\midrule
t-BPTT LSTM & 57.9\% $\pm$ 7.0\% &32 & 5.3k\\
sub-samp. LSTM &  69.2\% $\pm$ 8.3\% &32 & 5.3k\\
sign.-NCDE & 77.8\% $\pm$ 5.9\% & 32 & 35k\\
\midrule
expRNN & 40.0\% $\pm$ 10.1\% & 64 & 2.8k \\
IndRNN ($L$=2) & 49.7\% $\pm$ 4.8\% & 32 & 1.6k \\
coRNN & 86.7\% $\pm$ 3.0\%&32 & 2.4k \\
\textbf{UnICORNN} ($L$=2) & \textbf{90.3}\% $\pm$ \textbf{3.0}\% & 32 & 1.5k\\
\bottomrule
\end{tabular}
\end{sc}
\end{small}
\end{center}
\vskip -0.1in
\end{table}
\par As this data set has only recently been proposed as a test for RNNs in learning LTDs, it is unclear if the input sequences truly exhibit very long-time dependencies. To investigate this further, we train UnICORNN on a subset of the entries of the sequences. To this end, we consider using only the last entries as well as using a random subset of the entries. \fref{fig:worms} shows the distributional results (10 re-trainings of the best performing UnICORNN) for the number of entries used in each sub-sampling routine, ranging from only using $1000$ entries to using the full sequences for training. We can see that in order to reach a test accuracy of $80\%$ when training on the last entries of the sequences, at least the last 10k entries are needed. Moreover, for both sub-sampling methods the test accuracy increases monotonically as the number of entries for training is increased. On the other hand, using a random subset of the entries increases the test accuracy significantly when compared to using only the last entries of the sequences. This indicates that the important entries of the sequences, i.e. information needed in order to classify them correctly, are uniformly distributed throughout the full sequence. We thus conclude that the EigenWorms data set indeed exhibits \emph{very} long-time dependencies.

\begin{figure}[ht]
\vskip 0.2in
\begin{center}
\centerline{\includegraphics[width=\columnwidth]{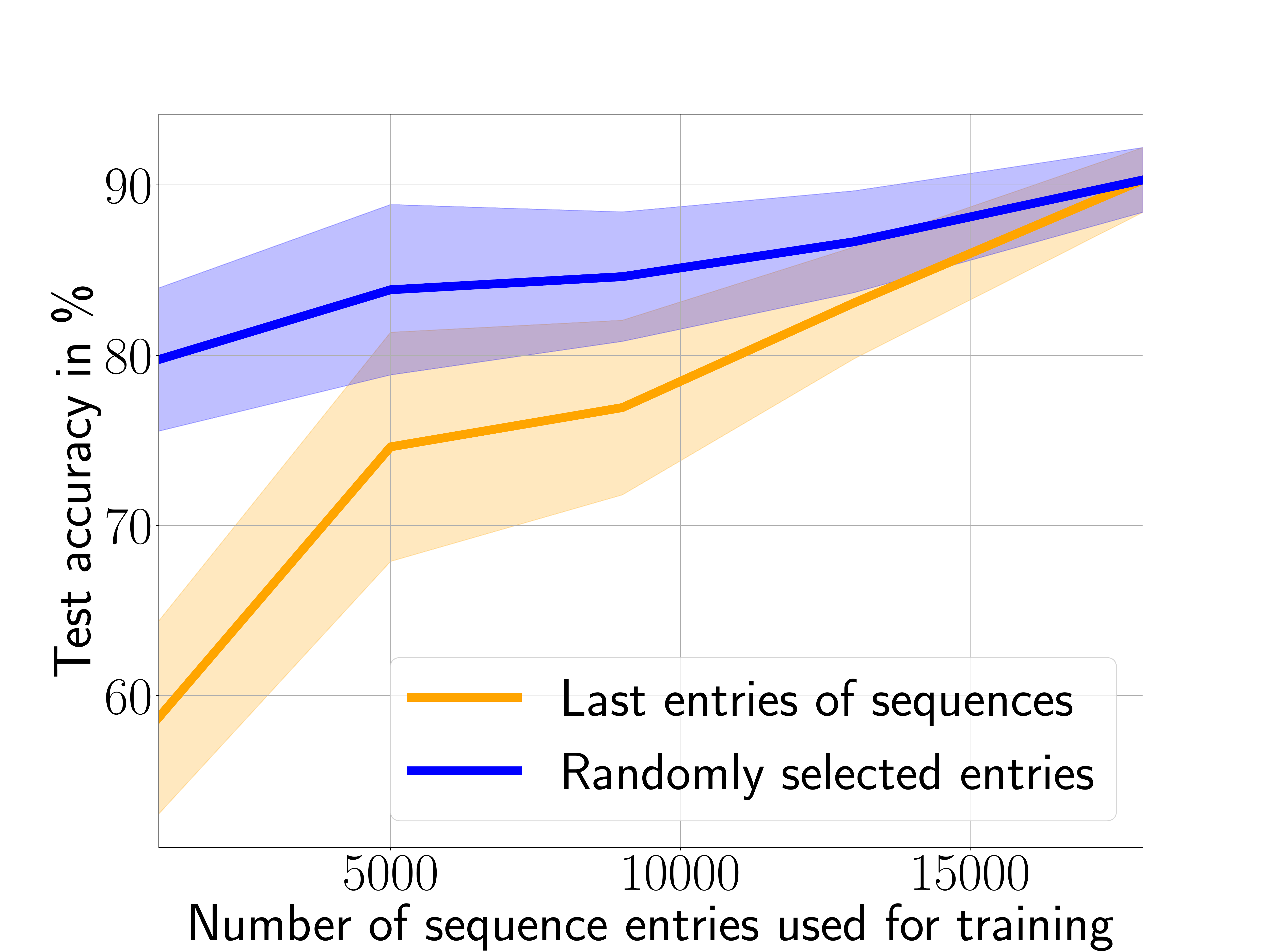}}
\caption{Test accuracy (mean and standard deviation) for the UnICORNN on EigenWorms for two types of sub-sampling approaches, i.e. using the last entries of the sequences as well as using a random subset of the entries. Both are shown for increasing number of entries used in each corresponding sub-sampling routine.}
\label{fig:worms}
\end{center}
\vskip -0.2in
\end{figure}
\paragraph{Healthcare application: Vital signs prediction}
We apply UnICORNN on two real-world data sets in health care, aiming to predict the vital signs of a patient, based on PPG and ECG signals. The data sets are part of the TSR archive \cite{ai_healthcare} and are based on clinical data from the Beth Israel Deaconess Medical Center. The PPG and ECG signals are sampled with a frequency of $125$Hz for $8$ minutes each. The resulting two-dimensional sequences have a length of $4000$. The goal is to predict a patient's respiratory rate (RR) and heart rate (HR) based on these signals. We compare UnICORNN to 3 leading RNN architectures for solving LTDs, i.e. expRNN, IndRNN and coRNN. Additionally, we present two baselines using the LSTM as well as the recently proposed sub-sampling method of computing signatures for neural controlled differential equations (NCDE) \cite{log_ode}. Following \cite{log_ode}, we split the $7949$ sequences in a training set, validation set and testing set, using a 70\%/15\%/15\% split. \Tref{tab:medical} shows the distributional results of all networks using 5 re-trainings of the best performing RNN. We observe that while the LSTM does not reach a low $L^2$ testing error in both experiments, the other RNNs approximate the vital signs reasonably well. However, UnICORNN clearly outperforms all other methods on both benchmarks. We emphasize that UnICORNN significantly outperforms all other state-of-the-art methods on estimating the RR, which is of major importance in modern healthcare applications for monitoring hospital in-patients as well as for mobile health applications, as special invasive equipment (for instance capnometry or measurement of gas flow) is normally needed to do so \cite{rr}.
\begin{table}[ht]
\caption{$L^2$ test error on vital sign prediction using $5$ re-trainings of each best performing network (based on the validation set), where the respiratory rate (RR) and heart rate (HR) is estimated based on PPG and ECG data.}
\label{tab:medical}
\vskip 0.15in
\begin{center}
\begin{small}
\begin{sc}
\begin{tabular}{lll}
\toprule
Model & RR & HR \\
\midrule
sign.-NCDE & 1.51 $\pm$ 0.08 &  2.97 $\pm$ 0.45  \\
LSTM  & 2.28 $\pm$ 0.25 & 10.7 $\pm$ 2.0  \\
expRNN & 1.57 $\pm$ 0.16 & 1.87 $\pm$ 0.19 \\
IndRNN ($L$=3) & 1.47 $\pm$ 0.09 & 2.10 $\pm$ 0.2 \\
coRNN & 1.45 $\pm$ 0.23 & 1.71 $\pm$ 0.1 \\
\textbf{UnICORNN} ($L$=3) &  \textbf{1.06} $\pm$ \textbf{0.03} & \textbf{1.39} $\pm$ \textbf{0.09}\\
\bottomrule
\end{tabular}
\end{sc}
\end{small}
\end{center}
\vskip -0.1in
\end{table}

\paragraph{Sentiment analysis: IMDB}
As a final experiment, we test the proposed UnICORNN on the widely used NLP benchmark data set IMDB \citep{imdb}, which consists of 50k online movie reviews with 25k reviews used for training and 25k reviews used for testing. This denotes a classical sentiment analysis task, where the model has to decide whether a movie review is positive or negative. We use 30\% of the training set (i.e. 7.5k reviews) as the validation set and restrict the dictionary to 25k words. We choose an embedding size of 100 and initialize it with the pretrained 100d GloVe \cite{glove} vectors.
\begin{table}[t]
\caption{Test accuracies on IMDB together with number of hidden units as well as total number of parameters $M$ (without embedding) for each network. All other results are taken from literature, specified in the main text.}
\label{tab:imdb}
\vskip 0.15in
\begin{center}
\begin{small}
\begin{sc}
\begin{tabular}{llll}
\toprule
Model &  test acc. & \#units & $M$ \\
\midrule
LSTM&  86.8\% & 128 & 220k\\
skip LSTM & 86.6\% & 128 & 220k \\
GRU & 85.2\% & 128 & 99k\\
ReLU GRU & 84.8\% & 128 & 99k \\
skip GRU & 86.6\% & 128 & 165k\\
coRNN & 87.4 \% &128 & 46k \\
\textbf{UnICORNN} ($L$=2) & \textbf{88.4}\% & 128 & 30k\\
\bottomrule
\end{tabular}
\end{sc}
\end{small}
\end{center}
\vskip -0.1in
\end{table}
\Tref{tab:imdb} shows the results for UnICORNN with 2 layers together with other recently proposed RNN architectures and gated baselines (which are known to perform very well on these tasks). The result of ReLU GRU is taken from \cite{imdb_gru}, of coRNN from \cite{coRNN} and all other results are taken from \cite{imdb_base}. We can see that UnICORNN outperforms the other methods while requiring significantly less parameters. We thus conclude, that the UnICORNN can also be successfully applied to problems, which do not necessarily exhibit long-term dependencies.
\paragraph{Further experimental results}
As stated before, UnICORNN has two hyperparameters, i.e. the maximum allowed time-step $\Dt$ and the damping parameter $\alpha$. It is of interest to examine how sensitive the performance of UnICORNN is with respect to variations of these hyperparameters. To this end, we consider the noise padded CIFAR-10 experiment and perform an ablation study of the test accuracy with respect to variations of both $\alpha$ and $\Dt$. Both hyperparameters are varied by an order of magnitude and the results of this study are plotted in \fref{fig:ablation}. We observe from this figure, that the results are indeed somewhat sensitive to the maximum allowed time-step $\Dt$ and show a variation of approximately $15\%$ with respect to to this hyperparameter. On the other hand, there is very little noticeable variation with respect to the damping parameter $\alpha$. Thus, it can be set to a default value, for instance $\alpha =1$, without impeding the performance of the underlying RNN.  

\begin{figure}[ht]
\vskip 0.2in
\begin{center}
\centerline{\includegraphics[width=\columnwidth]{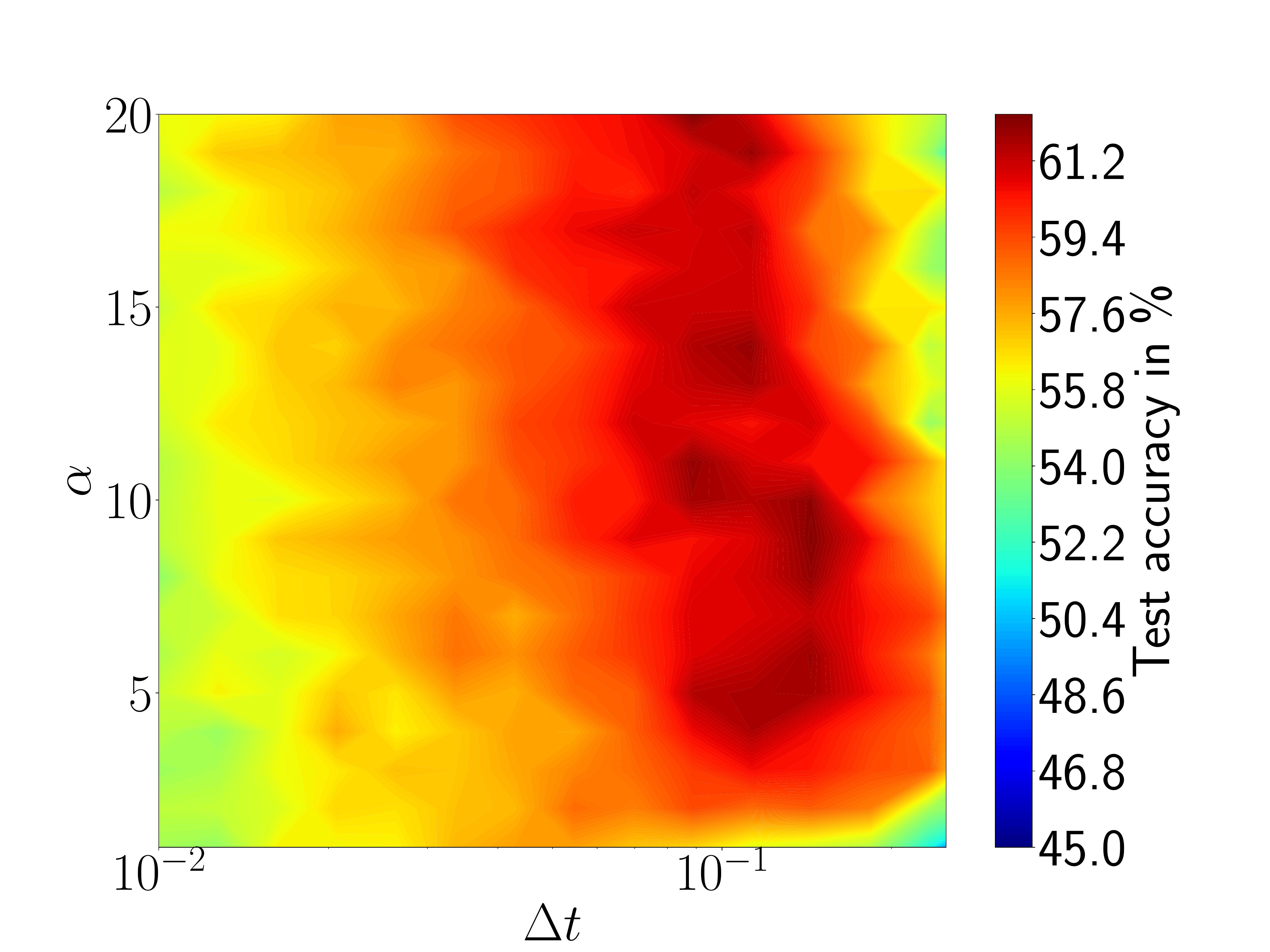}}
\caption{Ablation study on the hyperparameters $\Dt$ and $\alpha$ of UnICORNN \eqref{eq:ucrn} using the noise padded CIFAR-10 experiment.}
\label{fig:ablation}
\end{center}
\vskip -0.2in
\end{figure}
Next, we recall that the design of UnICORNN \eqref{eq:ucrn} enables it to learn the effective time step (with a possible maximum of $\Dt$) from data. It is instructive to investigate if this ability to express \emph{multi-scale} behavior is realized in practice or not. To this end, we consider the trained UnICORNN of the psMNIST task with $3$ layers and $256$ neurons. Here, a maximum time step of $\Dt =0.19$ was identified by the hyperparameter tuning. In \fref{fig:multi_scale_psmnist}, we plot the effective time step  $\Dt\hat{\sigma}(\bc_i^l)$, for each hidden neuron $i=1,\dots,256$ and each layer $l=1,2,3$. We observe from this figure that a significant variation of the effective time step is observed, both within the neurons in each layer, as well as between layers. In particular, the minimum effective time step is about $28$ times smaller than the maximum allowed time step. Thus, we conclude from this figure, that UnICORNN exploits its design features to learn multi-scale behavior that is latent in the data. This perhaps explains the superior performance of UnICORNN on many learning tasks. 

\begin{figure}[ht]
\vskip 0.2in
\begin{center}
\centerline{\includegraphics[width=\columnwidth]{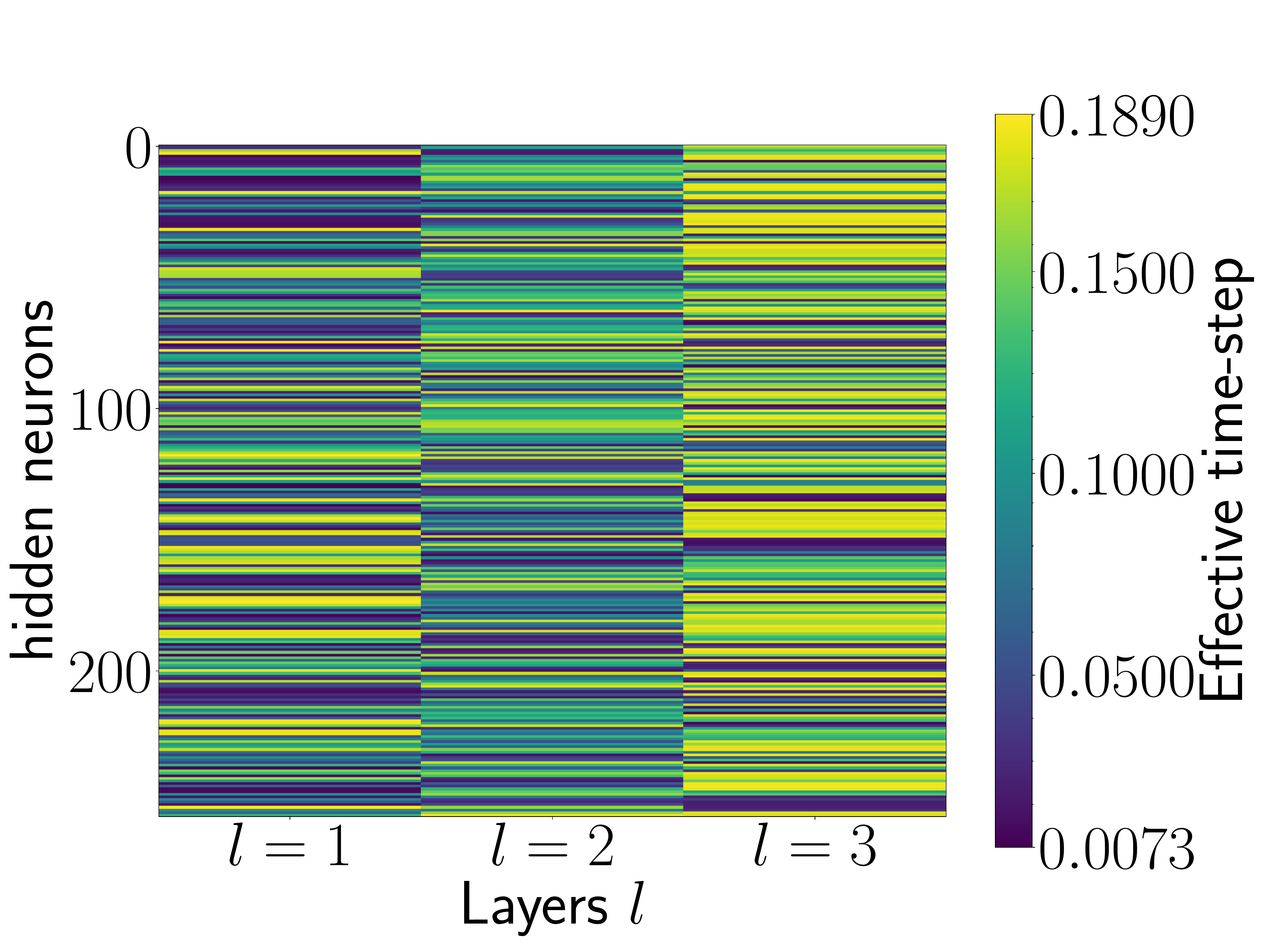}}
\caption{Effective time-step $\Dt\hat{\sigma}(\bc_i^l)$ for each hidden neuron $i=1,\dots,m$ and each layer $l=1,\dots,L$ of UnICORNN, after training on the psMNIST task using $m=256$ hidden units and $L=3$ layers.}
\label{fig:multi_scale_psmnist}
\end{center}
\vskip -0.2in
\end{figure}

\section{Discussion}
The design of RNNs that can accurately handle sequential inputs with long-time dependencies is very challenging. This is largely on account of the exploding and vanishing gradient problem (EVGP). Moreover, there is a significant increase in both computational time as well as memory requirements when LTD tasks have to be processed. Our main aim in this article was to present a novel RNN architecture which is fast, memory efficient, \emph{invertible} and mitigates the EVGP. To this end, we proposed UnICORNN \eqref{eq:ucrn}, an RNN based on the symplectic Euler discretization of a Hamiltonian system of second-order ODEs \eqref{eq:ode} modeling a network of independent, undamped, controlled and driven oscillators. In order to gain expressivity, we stack layers of RNNs and also endow this construction with a multi-scale feature by training the effective time step in \eqref{eq:ucrn}. 

Given the Hamiltonian structure of our continuous and discrete dynamical system, invertibility and volume preservation in phase space are guaranteed. Invertibility enables the proposed RNN to be memory efficient. The independence of neurons within each hidden layer allows us to build a highly efficient CUDA implementation of UnICORNN that is as fast as the fastest available RNN architectures. Under suitable smallness constraints on the maximum allowed time step $\Dt$, we prove rigorous upper bounds \eqref{eq:gbd} on the gradients and show that the exploding gradient problem is mitigated for UnICORNN. Moreover, we derive an explicit representation formula \eqref{eq:glb} for the gradients of \eqref{eq:ucrn}, which shows that the vanishing gradient problem is also mitigated. Finally, we have tested UnICORNN on a suite of benchmarks that includes both synthetic as well as realistic learning tasks, designed to test the ability of an RNN to deal with long-time dependencies. In all the experiments, UnICORNN was able to show state of the art performance. 

It is instructive to compare UnICORNN with two recently proposed RNN architectures, with which it shares some essential features. First, the use of coupled oscillators to design RNNs was already explored in the case of coRNN \cite{coRNN}. In contrast to coRNN, neurons in UnICORNN are independent (uncoupled) and as there is no damping, UnICORNN possesses a Hamiltonian structure. This paves the way for invertibility as well as for mitigating the EVGP without any assumptions on the weights whereas the mitigation of EVGP with coRNN was conditional on restrictions on weights. Finally, UnICORNN provides even better performance on benchmarks than coRNN, while being significantly faster. While we also use independent neurons in each hidden layer and stack RNN layers together as in IndRNN \cite{indrnn}, our design principle is completely different as it is based on Hamiltonian ODEs. Consequently, we do not impose weight restrictions, which are necessary for IndRNN to mitigate the EVGP. Moreover, in contrast to IndRNNs, our architecture is invertible and hence, memory efficient. 

This work can be extended in different directions. First, UnICORNN is a very flexible architecture in terms of stacking layers of RNNs together. We have used a fully connected stacking in \eqref{eq:ucrn} but other possibilities can be readily explored. See Appendix {\bf C.5} for a discussion on the use of residual connections in stacking layers of UnICORNN. Second, the invertibility of UnICORNN can be leveraged in the context of normalizing flows \cite{nf}, where the objective is to parametrize a flow such that the resulting Jacobian is readily computable. Finally, our focus in this article was on testing UnICORNN on learning tasks with long-time dependencies. Given that the underlying ODE \eqref{eq:ode} models oscillators, one can envisage that UnICORNN will be very competitive with respect to processing different time series data that arise in healthcare AI such as EEG and EMG data, as well as seismic time series from the geosciences.

\section*{Acknowledgements}
The research of TKR and SM was performed under  a project that has received funding from the European Research Council (ERC) under the European Union’s Horizon 2020 research and innovation programme (grant agreement No. 770880).

\bibliography{refs}
\bibliographystyle{icml2021}

\onecolumn
\icmltitlerunning{Supplementary Material for "UnICORNN: A recurrent model for learning \textit{very} long time dependencies"}
\icmltitle{Supplementary Material for\\ "UnICORNN: A recurrent model for learning \textit{very} long time dependencies"}
\appendix

\setcounter{equation}{0}
\section{Training details}
All experiments were run on GPU, namely NVIDIA GeForce GTX 1080 Ti and NVIDIA GeForce RTX 2080 Ti. The hidden weights $\bw$ of the UnICORNN are initialized according to  $\mathcal{U}(0,1)$, while all biases are set to zero. The trained vector $\bc$ is initialized according to $\mathcal{U}(-0.1,0.1)$. The input weights $\bV$ are initialized according to the Kaiming uniform initialization \citep{kaiming} based on the input dimension mode and the negative slope of the rectifier set to $a=8$.

The hyperparameters of the UnICORNN are selected using a random search algorithm based on a validation set. The hyperparameters of the best performing UnICORNN can be seen in \Tref{tab:nets}. The value for $\Dt$ and $\alpha$ is shared across all layers, except for the IMDB task and EigenWorms task, where we use a different $\Dt$ value for the first layer and the corresponding $\Dt$ value in \Tref{tab:nets} for all subsequent layers, i.e. we use $\Dt=6.6\times 10^{-3}$ for IMDB and $\Dt=2.81\times10^{-5}$ for EigenWorms in the first layer. 
Additionally, the dropout column corresponds to variational dropout \citep{gal}, which is applied after each consecutive layer. Note that for the IMDB task also an embedding dropout with $p=0.65$ is used.

We train the UnICORNN for a total of 50 epochs on the IMDB task and for a total of 250 epochs on the EigenWorms task. Moreover, we train UnICORNN for 650 epochs on psMNIST, after which we decrease the learning rate by a factor of 10 and proceed training for 3 times the amount of epochs used before reducing the learning rate. On all other tasks, UnICORNN is trained for 250 epochs, after which we decrease the learning rate by a factor of 10 and proceed training for additional 250 epochs.
The resulting best performing networks are selected \emph{based on a validation set}.

\begin{table}[h!]
  \caption{Hyperparameters of the best performing UnICORNN architecture (based on a validation set) for each experiment.}
  \label{tab:nets}
  \centering
  \begin{tabular}{llllll}
    \toprule
    \cmidrule(r){1-6}
    Experiment & learning rate & dropout & batch size & $\Dt$ & $\alpha$ \\
    \midrule
    noise padded CIFAR-10 & $3.14\times 10^{-2}$ & $1.0\times10^{-1}$ & $30$ & $ 1.26\times10^{-1}$ & $13.0$\\
    psMNIST (\#units = 128) & $1.14\times10^{-3}$ & $1.0\times10^{-1}$ & $64$& $4.82\times10^{-1}$ & $12.53$\\
    psMNIST (\#units = 256) & $2.51\times10^{-3}$ & $1.0\times10^{-1}$ & $32$ & $1.9\times10^{-1}$ & $30.65$ \\
    IMDB & $1.67\times10^{-4}$ & $6.1\times10^{-1}$ & $32$ & $2.05\times10^{-1}$ & $0.0$\\
    EigenWorms  & $8.59\times10^{-3}$ & $0.0$ & $8$ & $3.43\times10^{-2}$ & $0.0$ \\
    Healthcare: RR  & $3.98\times10^{-3}$ & $1.0\times10^{-1}$ & $32$ & $1.1\times10^{-2}$ & $9.0$ \\
    Healthcare: HR  &  $2.88\times10^{-3}$ & $1.0\times10^{-1}$ & $32$ & $4.6\times10^{-2}$ & $10.0$ \\

    \bottomrule
  \end{tabular}
\end{table}

\section{Implementation details}
As already described in the implementation details of the main paper, we can speed up the computation of the forward and backward pass, by parallelizing the input transformation and computing the recurrent part for each independent dimension in an independent CUDA thread. While the forward/backward pass for the input transformation is simply that of an affine transformation, we discuss only the recurrent part. Since we compute the gradients of each dimension of the UnICORNN independently and add them up afterwards to get the full gradient, we will simplify to the following one-dimensional system:

\begin{align*}
z_n &= z_{n-1} - \Dt\hat{\sigma}(c)[\sigma\left(wy_{n-1} + x_n \right) +\alpha y_{n-1}], \\
y_n &= y_{n-1} + \Dt\hat{\sigma}(c)z_n,
\end{align*}
where $x_n=(\bV \bu_n)_j$ is the transformed input corresponding to the respective dimension $j=1,\dots,m$.

Since we wish to train the UnICORNN on some given objective
\begin{equation}
\label{eq:td1}
\E := \sum\limits_{n=1}^N \Tilde{\E}(y_n),
\end{equation}
where $\Tilde{\E}$ is some loss function taking the hidden states $y_n$ as inputs, for instance mean-square distance of (possibly transformed) hidden states $y_n$ to some ground truth. During training, we compute gradients of the loss function \eqref{eq:td1} with respect to the following quantities ${\bf \Theta} = [w,\Dt,x_n]$, i.e.
\begin{equation}
\label{eq:td_grad1}
\frac{\partial \E}{\partial \theta} = \sum_{n=1}^N \frac{\partial \Tilde{\E}(y_n)}{\partial \theta}, \quad \forall ~\theta \in {\bf \Theta}.
\end{equation}

We can work out a recursion formula to compute the gradients in \eqref{eq:td_grad1}. We will exemplarily provide the formula for the gradient with respect to the hidden weight $w$. The computation of the gradients with respect to the other quantities follow similarly. 
Thus
\begin{align}
\label{eq:td_grad2}
\delta^z_k &= \delta^z_{k-1} + \delta^y_{k-1}\Dt\hat{\sigma}(c), \\
\delta^y_k &= \delta^y_{k-1} - \delta^z_{k}\Dt\hat{\sigma}(c)[\sigma^\prime(wy_{N-k} + x_{N-k+1})w + \alpha] + \frac{\partial \Tilde{\E}}{\partial y_{N-k}}, 
\end{align}
with initial values $\delta^y_0 = \frac{\partial \Tilde{\E}}{\partial y_{N}}$ and $\delta^z_0 = 0$.
The gradient can then be computed as
\begin{equation}
\label{eq:td_grad3}
    \frac{\partial \E}{\partial w} = \sum_{k=1}^{N} a_k, \qquad \text{with } a_k= -\delta^z_{k}\Dt\hat{\sigma}(c)\sigma^\prime(wy_{N-k} + x_{N-k+1})y_{N-k}.
\end{equation}
Note that this recursive formula is a direct formulation of the back-propagation through time algorithm \citep{bptt} for the UnICORNN. 

We can verify formula \eqref{eq:td_grad2}-\eqref{eq:td_grad3} by explicitly calculating the gradient in \eqref{eq:td_grad1}:

\begin{align*}
    \frac{\partial \E}{\partial w} &= \sum_{n=1}^N \frac{\partial \Tilde{\E}(y_n)}{\partial w} = \sum_{n=1}^{N-1} \frac{\partial \Tilde{\E}(y_n)}{\partial w} + \frac{\partial \Tilde{\E}}{\partial y_N} \left[\frac{\partial y_{N-1}}{\partial w} + \Dt\hat{\sigma}(c) \left( \frac{\partial z_{N-1}}{\partial w} - \Dt\hat{\sigma}(c)(\sigma^\prime(wy_{N-1} + x_N) \right. \right. \\
    &\left. \left. (y_{N-1}+w\frac{\partial y_{N-1}}{\partial w}  ) + \alpha \frac{\partial y_{N-1}}{\partial w}\right) \right]
    = \sum_{n=1}^{N-2} \frac{\partial \Tilde{\E}(y_n)}{\partial w} + a_1 + \delta^z_1\frac{\partial z_{N-1}}{\partial w} + \delta^y_1\frac{\partial y_{N-1}}{\partial w} \\ &= \sum_{n=1}^{N-2} \frac{\partial \Tilde{\E}(y_n)}{\partial w} + a_1 + \delta_1^y\frac{\partial y_{N-2}}{\partial w} + (\delta_1^y\Dt\hat{\sigma}(c) + \delta^z_1)\left( \frac{\partial z_{N-2}}{\partial w} - \Dt\hat{\sigma}(c)( \sigma^\prime(wy_{N-2}+x_{N-1}) \right.\\
    &\left. (y_{N-2} + w\frac{\partial y_{N-2}}{\partial w} ) + \alpha \frac{\partial y_{N-2}}{\partial w} ) \right) = \sum_{n=1}^{N-3} \frac{\partial \Tilde{\E}(y_n)}{\partial w} + \sum_{k=1}^2 a_k + \delta^z_2\frac{\partial z_{N-2}}{\partial w} + \delta^y_2\frac{\partial y_{N-2}}{\partial w}.
\end{align*}
Iterating the same reformulation yields the desired formula \eqref{eq:td_grad2}-\eqref{eq:td_grad3}.
\section{Rigorous bounds on UnICORNN}
We rewrite UnICORNN (Eqn. (6) in the main text) in the following form: for all $1 \leq \ell \leq L$ and for all $1 \leq i \leq m$
\begin{equation}
    \label{eq:ucrn_SM}
    \begin{aligned}
    \byli_n &= \byli_{n-1}  + \Dt \cli\bzli_n, \\
    \bzli_n &= \bzli_{n-1} -\Dt \cli\sli - \alpha \Dt \cli\byli_{n-1}, \\
    \Ali_{n-1} &= \bwli\byli_{n-1} +\left(\bV^\ell \by^{\ell-1}_n\right)^i + \bb^{\ell,i}. 
    \end{aligned}
\end{equation}
Here, we have denoted the $i$-th component of a vector ${\bf x}$ as ${\bf x}^i$.

We follow standard practice and set $\by^\ell_0 = \bz^\ell_0 \equiv 0$, for all $1 \leq \ell \leq L$. Moreover for simplicity of exposition, we set $\alpha > 0$ in the following. 
\subsection{Pointwise bounds on hidden states.}
We have the following bounds on the discrete hidden states, 
\begin{proposition}
\label{prop:31}
Let $\by^{\ell}_n,\bz^{\ell}_n$ be the hidden states at the $n$-th time level $t_n$ for UnICORNN \eqref{eq:ucrn_SM}, then under the assumption that the time step $\Dt << 1$ is sufficiently small, these hidden states are bounded as,
\begin{equation}
\label{eq:hsbd}
\max\limits_{1 \leq i \leq m} |\by^{\ell,i}_n| \leq \sqrt{\frac{2}{\alpha}\left(1+2\beta t_n\right)}, \quad \max\limits_{1 \leq i \leq m} |\bz^{\ell,i}_n| \leq \sqrt{2\left(1+2\beta t_n\right)} \quad \forall n, \forall~1 \leq \ell \leq L,
\end{equation}
with the constant 
$$
\beta=\max\{1+2\alpha,4\alpha^2\}.
$$
\end{proposition}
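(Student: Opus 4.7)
The plan is to exploit the fact that, within a single hidden layer, the neurons in UnICORNN are completely decoupled, so the bound reduces to a scalar statement for each component. Fix $\ell$ and $i$ and abbreviate $y_n := \by^{\ell,i}_n$, $z_n := \bz^{\ell,i}_n$, $w := \bw^{\ell,i}$, $\hat{s} := \hat{\sigma}(\bc^{\ell,i})$, and $A_{n-1} := w y_{n-1} + (\bV^\ell\by_{n}^{\ell-1})^i + \bb^{\ell,i}$. Note that $|\hat s|\le 1$ and $|\sigma(A_{n-1})|\le 1$ since $\sigma=\tanh$ and $\hat\sigma\in(0,1)$. The natural quantity to control is the discrete energy $E_n := \alpha y_n^2 + z_n^2$, which mirrors the quadratic part of the continuous Hamiltonian \eqref{eq:hamil}; the pointwise bounds \eqref{eq:hsbd} follow immediately from an estimate $E_n \le 2(1+2\beta t_n)$.

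The first concrete step is to compute $E_n - E_{n-1}$ by squaring both update equations in \eqref{eq:ucrn_SM} and substituting the explicit expression $z_n = z_{n-1} - \Delta t\,\hat s(\sigma(A_{n-1}) + \alpha y_{n-1})$ into the $y$-identity. The cross terms $\pm 2\alpha\Delta t\,\hat s\, y_{n-1} z_{n-1}$ and $\pm 2\alpha(\Delta t)^2\hat s^2 y_{n-1}\sigma(A_{n-1})$ cancel—this is exactly the hallmark of the symplectic discretization—and the remaining identity can be written in the compact form
\begin{equation*}
E_n - E_{n-1} = -2\Delta t\,\hat s\, z_{n-1}\sigma(A_{n-1}) + (\Delta t)^2\hat s^2\sigma(A_{n-1})^2 + \alpha(\Delta t)^2\hat s^2 z_n^2 - \alpha^2(\Delta t)^2\hat s^2 y_{n-1}^2.
\end{equation*}
After dropping the manifestly non-positive quartic term and using $|\sigma|,|\hat s|\le 1$, one arrives at an inequality of the shape $E_n - E_{n-1} \le 2\Delta t|z_{n-1}| + (\Delta t)^2 + \alpha(\Delta t)^2 z_n^2$.

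The next step is to absorb the implicit contribution $\alpha(\Delta t)^2 z_n^2 \le \alpha(\Delta t)^2 E_n$ into the left-hand side, which is legitimate once $\Delta t$ is small enough that $\alpha(\Delta t)^2 \le 1/2$. Applying Young's inequality $2|z_{n-1}|\le z_{n-1}^2 + 1 \le E_{n-1} + 1$ to the leading term then yields a one-step recurrence of the form $E_n \le (1+c_1\Delta t) E_{n-1} + c_2 \Delta t$, where $c_1, c_2$ can be tracked as explicit functions of $\alpha$. Unrolling this recurrence with $E_0 = 0$, together with an inductive hypothesis of the claimed linear-in-$t_n$ form, should yield $E_n \le 2(1+2\beta t_n)$ with $\beta = \max\{1+2\alpha,4\alpha^2\}$ emerging from bookkeeping of the constants $c_1, c_2$; the two cases in the definition of $\beta$ correspond to whether the linear forcing term or the $\alpha y$ restoring term dominates. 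The pointwise bounds \eqref{eq:hsbd} are then read off from $\alpha y_n^2 \le E_n$ and $z_n^2 \le E_n$.

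The main technical obstacle is that the naive Gronwall unrolling gives an exponential-in-$t_n$ bound; to obtain the sharper linear growth one must exploit the cancellations produced by the symplectic structure (in particular the negative $-\alpha^2(\Delta t)^2\hat s^2 y_{n-1}^2$ term and the cancellation of the two $\alpha$-cross terms), so that the growth of $E_n$ per step is driven only by the $O(1)$-bounded forcing $\sigma(A_{n-1})$ rather than by a term proportional to $E_{n-1}$ itself. The second subtlety is that the bound must hold uniformly in the layer index $\ell$, but this requires no outer induction: the right-hand side of the recurrence is independent of the input $(\bV^\ell \by_n^{\ell-1})^i$ because $\sigma$ is globally bounded, so the estimate propagates immediately to every layer with the same constants.
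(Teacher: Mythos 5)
Your proposal follows essentially the same route as the paper: a per-component discrete energy $\alpha y_n^2+z_n^2$, symplectic cancellation of the $\alpha$-cross terms (your energy identity is exactly what the paper obtains by multiplying the two update equations by $\alpha y_{n-1}$ and $z_n$ respectively), boundedness of $\sigma$ and $\hat\sigma$, a one-step recurrence $E_n\le(1+\beta\Dt)E_{n-1}+\mathcal{O}(\Dt)$, and unrolling from $E_0=0$; the only cosmetic difference is that you absorb $\alpha\Dt^2 z_n^2$ into the left-hand side, whereas the paper substitutes the update formula for $z_n$ and uses $(a+b+c)^2\le 4a^2+4b^2+2c^2$. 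Concerning the ``obstacle'' you flag at the end: the paper does not in fact avoid the term proportional to $E_{n-1}$ either --- it retains the geometric factor $(1+\beta\Dt)^n$ from the Gronwall unrolling and simply linearizes it as $\le 1+2\beta n\Dt$ under the standing assumption $\Dt\ll 1$ --- so your recurrence is exactly the one needed and no additional cancellation is required (or available).
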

\begin{proof}
We fix $\ell,n$ and multiply the first equation in \eqref{eq:ucrn_SM} with $\alpha \byli_{n-1}$ and use the elementary identity 
\begin{align*}
b(a-b) = \frac{a^2}{2} -\frac{b^2}{2} - \frac{1}{2}(a-b)^2,
\end{align*}
to obtain
\begin{equation}
    \label{eq:pf31}
    \begin{aligned}
    \frac{\alpha (\byli_n)^2}{2} &= \frac{\alpha (\byli_{n-1})^2}{2} + \frac{\alpha}{2}(\byli_n-\byli_{n-1})^2 + \alpha \Dt \cli\byli_{n-1}\bzli_n, \\
    &= \frac{\alpha (\byli_{n-1})^2}{2}  + \frac{\alpha \Dt^2}{2}(\cli)^2 (\bzli_n)^2 + \alpha \Dt \cli\byli_{n-1}\bzli_n.
    \end{aligned}
\end{equation}
Next, we multiply the second equation in \eqref{eq:ucrn_SM} with $\bzli_{n}$ and use the elementary identity 
\begin{align*}
a(a-b) = \frac{a^2}{2} -\frac{b^2}{2} + \frac{1}{2}(a-b)^2,
\end{align*}
to obtain
\begin{equation}
    \label{eq:pf32}
    \begin{aligned}
    \frac{(\bzli_n)^2}{2} &=   \frac{(\bzli_{n-1})^2}{2} - \frac{1}{2}(\bzli_n-\bzli_{n-1})^2 - \Dt\cli\sli\left(\bzli_n-\bzli_{n-1}\right) \\
    &- \Dt \cli\sli\bzli_{n-1} - \alpha \Dt \cli\byli_{n-1}\bzli_n.
\end{aligned}
\end{equation}
Adding \eqref{eq:pf31} and \eqref{eq:pf32} and using Cauchy's inequality yields,
\begin{align*}
     \frac{\alpha (\byli_n)^2}{2} + \frac{(\bzli_n)^2}{2} &\leq  \frac{\alpha (\byli_{n-1})^2}{2} + \frac{(1+\Dt)(\bzli_{n-1})^2}{2} +  \frac{\alpha \Dt^2}{2}(\cli)^2 (\bzli_n)^2\\
     &+ (\cli)^2(\sli)^2\Dt + \frac{\Dt -1}{2}(\bzli_n-\bzli_{n-1})^2 \\
\Rightarrow  \alpha (\byli_n)^2 +  (\bzli_n)^2  &\leq \alpha (\byli_{n-1})^2 + (1+\Dt)(\bzli_{n-1})^2 + 2\Dt + \alpha \Dt^2 (\bzli_n)^2,
\end{align*}
where the last inequality follows from the fact that $|\sigma|,|\hat{\sigma}| \leq 1$ and $\Dt < 1$.
Using the elementary inequality,
\begin{align*}
    (a + b + c)^2 \leq 4a^2 + 4b^2 + 2c^2,
\end{align*}
and substituting for $\bzli_n$ from the second equation of \eqref{eq:ucrn_SM} in the last inequality leads to,
\begin{align*}
 \alpha (\byli_n)^2 +  (\bzli_n)^2     &\leq (1+4\alpha^2\Dt^4) \alpha (\byli_{n-1})^2 +  (1+\Dt + 2 \alpha \Dt^2)(\bzli_{n-1})^2 + 2\Dt + 4\alpha \Dt^4.
\end{align*}
Denoting $H_n = \alpha (\byli_n)^2 +  (\bzli_n)^2 $ and 
\begin{align*}
    G := 1 + \beta\Dt, \quad \beta=\max\{1+2\alpha,4\alpha^2\}
\end{align*}
yields the following inequality,
\begin{equation}
    \label{eq:pf33}
    H_n \leq GH_{n-1} +  2\Dt(1 + 2\alpha \Dt^3).
\end{equation}
Iterating the above $n$-times and using the fact that the initial data is such that $H_0 \equiv 0$ we obtain,
\begin{equation}
    \label{eq:pf35}
    \begin{aligned}
    H_n &\leq \left(2\Dt + 4\alpha \Dt^4\right) \sum_{k=0}^{n-1} (1+\beta \Dt)^{k} 
    \\&\leq 
    \frac{(1+\beta \Dt)^n}{\beta \Dt} \left(2\Dt + 4\alpha \Dt^4\right) \\
    &\leq \frac{1}{\beta}\left(1+2\beta n \Dt\right)\left(2 + 4\alpha \Dt^3\right) \quad {\rm as}~\Dt << 1, \\
    &\leq 2(1+2\beta t_n) ~({\rm from~definition~of~}\beta).
    \end{aligned}
\end{equation}
The definition of $H$ clearly implies the desired bound \eqref{eq:hsbd}.
\end{proof}
\subsection{On the exploding gradient problem for UnICORNN and Proof of proposition 3.1 of the main text.}
We train the RNN \eqref{eq:ucrn_SM} to minimize the loss function,
\begin{equation}
\label{eq:lf1_SM}
\E := \frac{1}{N}\sum\limits_{n=1}^N \E_n, \quad \E_n = \frac{1}{2} \|\by^L_n - \bar{\by}_n\|_2^2,
\end{equation}
with $\bar{\by}$ being the underlying ground truth (training data). Note that the loss function \eqref{eq:lf1_SM} only involves the output at the last hidden layer (we set the affine output layer to identity for the sake of simplicity). During training, we compute gradients of the loss function \eqref{eq:lf1_SM} with respect to the trainable weights and biases $\bf \Theta = [\bw^\ell,\bV^{\ell}, \bb^\ell,\bc^\ell]$, for all $1 \leq \ell \leq L$ i.e.
\begin{equation}
\label{eq:grad1_SM}
\frac{\partial \E}{\partial \theta} = \frac{1}{N}\sum_{n=1}^N \frac{\partial \E_n}{\partial \theta}, \quad \forall ~\theta \in {\bf \Theta}.
\end{equation} 
We have the following bound on the gradient \eqref{eq:grad1_SM},
\begin{proposition}
\label{prop:3_SM}
Let the time step $\Dt << 1$ be sufficiently small in the RNN \eqref{eq:ucrn_SM} and let $\by^\ell_n,\bz^\ell_n$, for $1 \leq \ell \leq L$, be the hidden states generated by the RNN \eqref{eq:ucrn_SM}. Then, the gradient of the loss function $\E$ \eqref{eq:lf1_SM} with respect to any parameter $\theta \in {\bf \Theta}$ is bounded as,
\begin{equation}
    \label{eq:gbd_SM}
    \left|\frac{\partial \E}{\partial \theta}\right| \leq \frac{1-(\Dt)^L}{1-\Dt}T (1+2\gamma T) \overline{\bV}(\overline{Y} + {\bf F}){\bf \Delta},
\end{equation}
with $\bar{Y} = \max\limits_{1 \leq n \leq N} \|\bar{\by}_n\|_{\infty}$, be a bound on the underlying training data and other quantities in \eqref{eq:gbd_SM} defined as,
\begin{equation}
    \label{eq:gbddef_SM}
    \begin{aligned}
\gamma &= \max\left(2,\|\bw^{L}\|_{\infty}+\alpha\right) + \frac{\left(\max\left(2,\|\bw^{L}\|_{\infty}+\alpha\right)\right)^2}{2}, \\
    \overline{\bV} &= \prod\limits_{q=1}^L \max\{1, \|\bV^q\|_{\infty}\}, \\
    {\bf F} &= \sqrt{\frac{2}{\alpha}\left(1+2\beta T\right)}, \\
    {\bf \Delta} &= \left(2 + \sqrt{\left(1+2\beta T\right)} + (2+\alpha) \sqrt{\frac{2}{\alpha}\left(1+2\beta T\right)}\right).
    \end{aligned}
\end{equation}
\end{proposition}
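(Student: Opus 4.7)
The plan is to expand $\partial \E / \partial \theta$ via the chain rule and bound each resulting factor using the pointwise hidden-state estimates from Proposition~\ref{prop:31} together with the symplectic-Euler structure of \eqref{eq:ucrn_SM}. For any $\theta \in {\bf \Theta}$ I would start from the decomposition
\[
\frac{\partial \E_n}{\partial \theta} = \sum_{\ell=1}^L \sum_{k=1}^n \frac{\partial \E_n}{\partial \bX_n^L}\,\frac{\partial \bX_n^L}{\partial \bX_k^\ell}\,\frac{\partial^+ \bX_k^\ell}{\partial \theta},
\]
and bound each of the three factors separately. Since $\E_n = \tfrac{1}{2}\|\by_n^L - \bar{\by}_n\|_2^2$, the loss-gradient factor reduces to controlling $\|\by_n^L - \bar{\by}_n\|_\infty$, which is at most $\overline{Y} + \mathbf{F}$ by the hypothesis on $\bar{\by}_n$ combined with Proposition~\ref{prop:31}. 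The ``direct'' partial $\partial^+ \bX_k^\ell / \partial \theta$ can be read off \eqref{eq:ucrn_SM}: whether $\theta$ is a component of $\bw^\ell$, $\bV^\ell$, $\bb^\ell$ or $\bc^\ell$, it carries an explicit prefactor $\Dt\,\hat\sigma(\bc^\ell)$, and the remaining terms are controlled by $|\sigma|,|\hat\sigma|,|\sigma'|\le 1$ together with the bounds on $\by^{\ell}_{k-1}, \bz^{\ell}_{k-1}, \by^{\ell-1}_{k}$ from Proposition~\ref{prop:31}, so the whole direct partial is bounded by $\Dt\,\mathbf{\Delta}$ with $\mathbf{\Delta}$ as in \eqref{eq:gbddef_SM}.

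The main obstacle is the mixed propagation Jacobian $\partial \bX_n^L / \partial \bX_k^\ell$, which I would split into a product of horizontal one-step recurrent Jacobians (within a fixed layer) and vertical inter-layer Jacobians. For the horizontal part, differentiating one step of \eqref{eq:ucrn_SM} gives a Jacobian of the form $I + \Dt\,J_n^\ell$, where the block $J_n^\ell$ is assembled from $\sigma'(\bA_{n-1}^\ell)$, $\bw^\ell$, $\alpha$ and $\hat\sigma(\bc^\ell)$; composing the $\by$- and $\bz$-updates of the symplectic step produces an extra quadratic-in-$\Dt$ contribution, and one checks $\|J_n^\ell\|_\infty \le \gamma$ with $\gamma$ as in \eqref{eq:gbddef_SM}. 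Iterating over at most $N$ recurrent steps under the smallness hypothesis $\Dt \ll 1$ gives $(1+\Dt\gamma)^N \le 1+2\gamma T$ by a standard discrete Gronwall estimate. For the vertical part, each jump from layer $q-1$ to layer $q$ at a fixed time step brings down a factor of $\|\bV^q\|_\infty \le \max\{1,\|\bV^q\|_\infty\}$ together with an explicit $\Dt$ from the second line of \eqref{eq:ucrn_SM}, so $L-\ell$ such jumps contribute at most $\overline{\bV}\,\Dt^{L-\ell}$.

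Assembling the pieces, summing the explicit $\Dt$ from the direct derivative over $k=1,\dots,N$ produces the prefactor $T = N\Dt$, and summing the $\Dt$-weighted vertical contributions over $\ell=1,\dots,L$ gives the geometric series $\sum_{q=0}^{L-1}\Dt^{q} = \tfrac{1-\Dt^L}{1-\Dt}$. Multiplying this combinatorial prefactor with $(1+2\gamma T)\overline{\bV}$ from the propagation Jacobian, $\overline{Y}+\mathbf{F}$ from the loss gradient, and $\mathbf{\Delta}$ from the direct derivative then yields \eqref{eq:gbd_SM}. The most delicate step is the passage from the one-step symplectic Jacobian to the linear-in-$T$ bound: it is precisely the Hamiltonian structure underlying \eqref{eq:ucrn_SM} that keeps $\|J_n^\ell\|_\infty$ uniformly controlled by $\gamma$ rather than growing in $n$, so that iterating under $\Dt \ll 1$ produces only polynomial growth in $N$; this is where the exploding-gradient mitigation is actually earned.
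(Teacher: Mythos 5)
Your proposal follows essentially the same route as the paper's proof in Appendix C.2: the same chain-rule decomposition into $\frac{\partial \E_n}{\partial \bX^L_n}\frac{\partial \bX^L_n}{\partial \bX^\ell_k}\frac{\partial^+ \bX^\ell_k}{\partial \theta}$, the same splitting of the propagation Jacobian into within-layer factors bounded by $1+\gamma\Dt$ and inter-layer factors bounded by $\|\bV^q\|_\infty\Dt$, and the same assembly of the sums over $k$ and $\ell$ into the $T$ and geometric-series prefactors. The argument is correct as outlined; the only cosmetic difference is that the paper obtains the bound $\gamma$ on the one-step Jacobian purely from $|\sigma'|,|\hat\sigma|\le 1$ and the weight bound $\|\bw^L\|_\infty$, without appealing to the Hamiltonian structure at that step.
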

\begin{proof}
For any $1 \leq n \leq N$ and $1 \leq \ell \leq L$, let $\bX^\ell_n \in \R^{2m}$ be the augmented hidden state vector defined by,
\begin{equation}
    \label{eq:bx}
\bX^\ell_n = \left[\by^{\ell,1}_n,\bz^{\ell,1}_n,\ldots,\by^{\ell,i}_n,\bz^{\ell,i}_n,\ldots,\by^{\ell,m}_n,\bz^{\ell,m}_n  \right].
\end{equation}
For any $\theta \in {\bf \Theta}$, we can apply the chain rule repeatedly to obtain the following extension of the formula of \cite{vanish_grad} to a deep RNN,
\begin{equation}
\label{eq:grad2_SM}
\frac{\partial \E_n}{\partial \theta} = \sum\limits_{\ell=1}^L\sum\limits_{k=1}^n \underbrace{\frac{\partial \E_n}{\partial \bX^L_n} \frac{\partial \bX^L_n}{\partial \bX^\ell_k} \frac{\partial^{+} \bX^\ell_k}{\partial \theta}}_{\frac{\partial \E^{(n,L)}_{k,\ell}}{\partial \theta}}.
\end{equation}
Here, the notation $\frac{\partial^{+} \bX^\ell_k}{\partial \theta}$ refers to taking the partial derivative of $\bX^\ell_k$ with respect to the parameter $\theta$, while keeping the other arguments constant. 

We remark that the quantity $\frac{\partial \E^{(n,L)}_{k,\ell}}{\partial \theta}$ denotes the contribution from the $k$-recurrent step at the $l$-th hidden layer of the deep RNN \eqref{eq:ucrn_SM} to the overall hidden state gradient at the step $n$. 

It is straightforward to calculate that,
\begin{equation}
    \label{eq:1gd}
  \frac{\partial \E_n}{\partial \bX^L_n} = \left[\by^{L,1}_n-\overline{\by}^1_n,0,\ldots,\by^{L,i}_n-\overline{\by}^i_n,0,\ldots,\by^{L,m}_n-\overline{\by}^m_n,0\right].
  \end{equation}

Repeated application of the chain and product rules yields,
\begin{equation}
\label{eq:grad3}
 \frac{\partial \bX^L_n}{\partial \bX^\ell_k} = \prod\limits_{j=k+1}^n  \frac{\partial \bX^L_j}{\partial \bX^L_{j-1}} \prod\limits_{q=\ell+1}^L  \frac{\partial \bX^q_k}{\partial \bX^{q-1}_{k}}. 
\end{equation}
For any $j$, a straightforward calculation using the form of the RNN \eqref{eq:ucrn_SM} leads to the following representation formula for the matrix $\frac{\partial \bX^L_j}{\partial \bX^L_{j-1}} \in \R^{2m} \times \R^{2m}$:
\begin{equation}
    \label{eq:mat1}
    \frac{\partial \bX^L_j}{\partial \bX^L_{j-1}}= \begin{bmatrix}
    \bB^{L,1}_{j} &0 &\ldots &0 \\
    0&  \bB^{L,2}_{j} &\ldots &0 \\
    \ldots & \ldots & \ldots & \ldots \\
     \ldots & \ldots & \ldots & \ldots \\
     0 &\ldots & 0 &  \bB^{L,m}_{j}
    \end{bmatrix},
\end{equation}
with the block matrices $\bB^{L,i}_j \in \R^{2 \times 2}$ given by,
\begin{equation}
    \label{eq:mat2}
    \bB^{L,i}_j  = \begin{bmatrix}
    1 - (\cLi)^2 \Dt^2 \left(\bwLi \sigma^{\prime}(\ALi_{j-1})+\alpha\right) & \cLi \Dt \\
    -\cLi \Dt \left(\bwLi \sigma^{\prime}(\ALi_{j-1})+\alpha\right) & 1 
    \end{bmatrix}.
\end{equation}
Similarly for any $q$, the matrix  $\frac{\partial \bX^q_k}{\partial \bX^{q-1}_{k}} \in \R^{2m \times 2m}$ can be readily computed as,
\begin{equation}
    \label{eq:mat3}
     \frac{\partial \bX^q_k}{\partial \bX^{q-1}_{k}} = \begin{bmatrix}
     \bD^{q,k}_{11} & 0 & \bD^{q,k}_{12} & 0 &\ldots &\ldots &\bD^{q,k}_{1m} & 0 \\
     \bE^{q,k}_{11} & 0 & \bE^{q,k}_{12} & 0 &\ldots &\ldots &\bE^{q,k}_{1m} & 0 \\
     \ldots & \ldots & \ldots & \ldots & \ldots & \ldots & \ldots & \ldots \\
      \ldots & \ldots & \ldots & \ldots & \ldots & \ldots & \ldots & \ldots \\
      \bD^{q,k}_{m1} & 0 & \bD^{q,k}_{m2} & 0 &\ldots &\ldots &\bD^{q,k}_{mm} & 0 \\
     \bE^{q,k}_{m1} & 0 & \bE^{q,k}_{m2} & 0 &\ldots &\ldots &\bE^{q,k}_{mm} & 0
\end{bmatrix},
\end{equation}
with entries given by,
\begin{equation}
    \bD^{q,k}_{i,\bar{i}} = -\Dt^2 (\hat{\sigma}(\bc^{q,i}))^2 \sigma^{\prime}\left(\bA^{q,i}_{k-1}\right)\bV^q_{i\bar{i}}, \quad 
    \bE^{q,k}_{i,\bar{i}} = -\Dt \hat{\sigma}(\bc^{q,i}) \sigma^{\prime}\left(\bA^{q,i}_{k-1}\right)\bV^q_{i\bar{i}}.
\end{equation}
A direct calculation with \eqref{eq:mat2} leads to,
\begin{equation}
    \label{eq:mat4}
    \begin{aligned}
    \|\bB^{L,i}_j \|_{\infty} &\leq \max\left(1 + \Dt +(|\bw^{L,i}|+\alpha)\Dt^2, 1 + (|\bw^{L,i}|+\alpha)\Dt \right) \\
    &\leq 1 + \max\left(2,|\bw^{L,i}|+\alpha\right)\Dt +  \left(\max\left(2,|\bw^{L,i}|+\alpha\right)\right)^2 \frac{\Dt^2}{2}.
    \end{aligned}
\end{equation}
Using the definition of the $L^{\infty}$ norm of a matrix, we use \eqref{eq:mat4} to the derive the following bound from \eqref{eq:mat1},
\begin{equation}
    \label{eq:mat5}
    \begin{aligned}
 \left \|\frac{\partial \bX^L_j}{\partial \bX^L_{j-1}}  \right \|_{\infty} &\leq 1 + \max\left(2,\|\bw^{L}\|_{\infty}+\alpha\right)\Dt +  \left(\max\left(2,\|\bw^{L}\|_{\infty}+\alpha\right)\right)^2 \frac{\Dt^2}{2} \\
 &\leq 1 + \gamma \Dt,
    \end{aligned}
\end{equation}
with $\gamma$ defined in \eqref{eq:gbddef_SM}.

As $\Dt <1$, it is easy to see that,
\begin{equation}
    \label{eq:mat6}
   \left \|\frac{\partial \bX^q_k}{\partial \bX^{q-1}_{k}}  \right \|_{\infty} \leq  \|\bV^q\|_{\infty} \Dt.
\end{equation}
Combining \eqref{eq:mat5} and \eqref{eq:mat6} , we obtain from \eqref{eq:grad3}
\begin{equation}
    \label{eq:mat7}
    \begin{aligned}
    \left \|\frac{\partial \bX^L_n}{\partial \bX^\ell_{k}}  \right \|_{\infty} &\leq \prod\limits_{j=k+1}^n  \left\|\frac{\partial \bX^L_j}{\partial \bX^L_{j-1}}\right \|_{\infty} \prod\limits_{q=\ell+1}^L  \left\|\frac{\partial \bX^q_k}{\partial \bX^{q-1}_{k}}\right\|_{\infty} \\ 
    &\leq \Dt^{L-\ell}\prod\limits_{q=\ell+1}^L \|\bV^q\|_{\infty} (1+2\gamma(n-k)\Dt), \quad ({\rm as}~\Dt <<1) \\
    &\leq \overline{\bV} \Dt^{L-\ell} (1+2\gamma t_n),
    \end{aligned}
\end{equation}
where the last inequality follows from the fact that $t_n = n \Dt \leq T$ and the definition of $\overline{\bV}$ in \eqref{eq:gbddef_SM}.

Next, we observe that for any $\theta \in {\bf \Theta}$
\begin{equation}
    \label{eq:p1}
    \frac{\partial^+\bX^\ell_k}{\partial \theta} = \left[
\frac{\partial^+\by^{\ell,1}_k}{\partial \theta},\frac{\partial^+\bz^{\ell,1}_k}{\partial \theta}\ldots,
\ldots,\frac{\partial^+\by^{\ell,i}_k}{\partial \theta},\frac{\partial^+\bz^{\ell,i}_k}{\partial \theta},\ldots,\ldots,\frac{\partial^+\by^{\ell,m}_k}{\partial \theta},\frac{\partial^+\bz^{\ell,m}_k}{\partial \theta}\right]^\top.
\end{equation}
For any $1 \leq i \leq m$, a direct calculation with the RNN \eqref{eq:ucrn_SM} yields,
\begin{equation}
    \label{eq:p2}
    \begin{aligned}
    \frac{\partial^+\by^{\ell,i}_k}{\partial \theta}&= \Dt \hat{\sigma}^{\prime}(\bc^{\ell,i})\frac{\partial \bc^{\ell,i}}{\partial \theta}\bz^{\ell,i}_k + \Dt \hat{\sigma}(\bc^{\ell,i})\frac{\partial^+\bz^{\ell,i}_k}{\partial \theta}, \\
    \frac{\partial^+\bz^{\ell,i}_k}{\partial \theta}&= -\Dt \hat{\sigma}^{\prime}(\bc^{\ell,i})\frac{\partial \bc^{\ell,i}}{\partial \theta}\sigma(\bA^{\ell,i}_{k-1}) -
    \Dt \hat{\sigma}(\bc^{\ell,i}) \sigma^{\prime}(\bA^{\ell,i}_{k-1})\frac{\partial \bA^{\ell,i}_{k-1}}{\partial \theta}-\alpha\Dt \hat{\sigma}^{\prime}(\bc^{\ell,i})\frac{\partial \bc^{\ell,i}}{\partial \theta}\by^{\ell,i}_{k-1}.
    \end{aligned}
\end{equation}
Next, we have to compute explicitly $\frac{\partial \bc^{\ell,i}}{\partial \theta}$ and $\frac{\partial \bA^{\ell,i}_{k-1}}{\partial \theta}$ in order to complete the expressions \eqref{eq:p2}. To this end, we need to consider explicit forms of the parameter $\theta$ and obtain, 

If $\theta = \bw^{q,p}$, for some $1\leq q \leq L$ and $1 \leq p \leq m$, then,
\begin{equation}
    \label{eq:p3}
    \frac{\partial \bA^{\ell,i}_{k-1}}{\partial \theta} =\begin{cases} 
    \by^{\ell,i}_{k-1}, &{\rm if}\quad q=\ell, p=i, \\
    0, &{\rm if}\quad {\rm otherwise}.
     \end{cases}
\end{equation}

If $\theta = \bb^{q,p}$, for some $1\leq q \leq L$ and $1 \leq p \leq m$, then,
\begin{equation}
    \label{eq:p4}
    \frac{\partial \bA^{\ell,i}_{k-1}}{\partial \theta} =\begin{cases} 
    1, &{\rm if}\quad q=\ell, p=i, \\
    0, &{\rm if}\quad {\rm otherwise}.
     \end{cases}
\end{equation}

If $\theta = \bV^{q}_{p,\bar{p}}$, for some $1\leq q \leq L$ and $1 \leq p,\bar{p} \leq m$, then,
\begin{equation}
    \label{eq:p5}
    \frac{\partial \bA^{\ell,i}_{k-1}}{\partial \theta} =\begin{cases} 
    \by^{\ell-1,\bar{p}}_{k}, &{\rm if}\quad q=\ell, p=i, \\
    0, &{\rm if}\quad {\rm otherwise}.
     \end{cases}
\end{equation}
If $\theta = \bc^{q,p}$for any $1\leq q \leq L$ and $1 \leq p \leq m$, then,
\begin{equation}
    \label{eq:p6}
\frac{\partial \bA^{\ell,i}_{k-1}}{\partial \theta} \equiv 0.
\end{equation}

Similarly, if $\theta = \bw^{q,p}$ or $\theta = \bb^{q,p}$, for some $1\leq q \leq L$ and $1 \leq p \leq m$, or If $\theta = \bV^{q}_{p,\bar{p}}$, for some $1\leq q \leq L$ and $1 \leq p,\bar{p} \leq m$, then
\begin{equation}
    \label{eq:p7}
\frac{\partial \bc^{\ell,i}}{\partial \theta} \equiv 0.
\end{equation}
On the other hand, if $\theta = \bc^{q,p}$for any $1\leq q \leq L$ and $1 \leq p \leq m$, then
\begin{equation}
    \label{eq:p8}
    \frac{\partial \bc^{\ell,i}}{\partial \theta} =\begin{cases} 
    1, &{\rm if}\quad q=\ell, p=i, \\
    0, &{\rm if}\quad {\rm otherwise}.
     \end{cases}
\end{equation}
For any $\theta \in {\bf \Theta}$, by substituting \eqref{eq:p3} to \eqref{eq:p8} into \eqref{eq:p2} and doing some simple algebra with norms, leads to the following inequalities,
\begin{equation}
    \label{eq:p9}
    \left|\frac{\partial^+\bz^{\ell,i}_k}{\partial \theta}\right| \leq \Dt \left (1 + \alpha|\by^{\ell,i}_{k-1}| + \max \left(|\by^{\ell,i}_{k-1}|,|\by^{\ell-1,\bar{p}}_{k}|,1\right)\right),
\end{equation}
and,
\begin{equation}
    \label{eq:p10}
    \left|\frac{\partial^+\by^{\ell,i}_k}{\partial \theta}\right| \leq \Dt|\bz^{\ell,i}_k|+ \Dt^2 \left (1 + \alpha|\by^{\ell,i}_{k-1}| + \max \left(|\by^{\ell,i}_{k-1}|,|\by^{\ell-1,\bar{p}}_{k}|,1\right)\right),
\end{equation}
for any $1 \leq \bar{p} \leq m$.

By the definition of $L^{\infty}$ norm of a vector and some straightforward calculations with \eqref{eq:p10} yields,
\begin{equation}
    \label{eq:p11}
    \left\|\frac{\partial^+\bX^{\ell}_k}{\partial \theta}\right\|_{\infty} \leq \Dt \left(2 + \|\bz^\ell_k\|_{\infty} + (1+\alpha)\|\by^{\ell}_{k-1}\|_{\infty} + \|\by^{\ell-1}_k\|_{\infty}\right).
\end{equation}
From the pointwise bounds \eqref{eq:hsbd}, we can directly bound the above inequality further as,
\begin{equation}
    \label{eq:p12}
    \left\|\frac{\partial^+\bX^{\ell}_k}{\partial \theta}\right\|_{\infty} \leq \Dt \left(2 + \sqrt{2\left(1+2\beta T\right)} + (2+\alpha) \sqrt{\frac{2}{\alpha}\left(1+2\beta T\right)}\right).
\end{equation}

By \eqref{eq:1gd} and the definition of $\overline{Y}$ as well as the bound \eqref{eq:hsbd} on the hidden states, it is straightforward to obtain that,
\begin{equation}
    \label{eq:pe1}
    \left\|\frac{\partial \E_n}{\partial \bX^L_n}\right\|_{\infty} \leq \overline{Y} + \sqrt{\frac{2}{\alpha}\left(1+2\beta T\right)}
\end{equation}
From the definition in \eqref{eq:grad2_SM}, we have
\begin{equation}
    \label{eq:pe2}
    \begin{aligned}
    \left|\frac{\partial \E^{(n,L)}_{k,\ell}}{\partial \theta}\right| &\leq \left\|\frac{\partial \E_n}{\partial \bX^L_n} \right\|_{\infty} \left\|\frac{\partial \bX^L_n}{\partial \bX^\ell_k}\right\|_{\infty} \left\|\frac{\partial^{+} \bX^\ell_k}{\partial \theta}\right\|_{\infty}.
    \end{aligned}
\end{equation}
Substituting \eqref{eq:pe1}, \eqref{eq:p12} and \eqref{eq:mat5} into \eqref{eq:pe2} yields,
\begin{equation}
    \label{eq:pe3}
     \left|\frac{\partial \E^{(n,L)}_{k,\ell}}{\partial \theta}\right| \leq \Dt^{L-\ell+1}\left(1+2\gamma T\right) \overline{\bV}(\overline{Y} + {\bf F}){\bf \Delta},
\end{equation}
with ${\bf F}$ and ${\bf \Delta}$ defined in \eqref{eq:gbddef_SM}.

Therefore, from the fact that $\Dt < 1, t_n = n\Dt \leq T$ and \eqref{eq:grad2_SM}, we obtain
\begin{equation}
    \label{eq:pe5}
    \left|\frac{\partial \E_n}{\partial \theta}\right| \leq \frac{1-(\Dt)^L}{1-\Dt}T (1 +2\gamma T) \overline{\bV}(\overline{Y} + {\bf F}){\bf \Delta}.
\end{equation}
By the definition of the loss function \eqref{eq:lf1_SM} and the fact that the right-hand-side of \eqref{eq:pe5} is independent of $n$ leads to the desired bound \eqref{eq:gbd_SM}.

\end{proof}
\subsection{On the Vanishing gradient problem for UnICORNN and Proof of Proposition 3.2 of the main text.}
By applying the chain rule repeatedly to the each term on the right-hand-side of \eqref{eq:grad1_SM}, we obtain 
\begin{equation}
\label{eq:grad2_SM_vanish}
\frac{\partial \E_n}{\partial \theta} =\sum\limits_{\ell=1}^L\sum\limits_{k=1}^n \frac{\partial \E^{(n,L)}_{k,\ell}}{\partial \theta},~\frac{\partial \E^{(n,L)}_{k,\ell}}{\partial \theta}:= \frac{\partial \E_n}{\partial \bX^L_n} \frac{\partial \bX^L_n}{\partial \bX^\ell_k} \frac{\partial^{+} \bX^\ell_k}{\partial \theta}. 
\end{equation}
Here, the notation $\frac{\partial^{+} \bX^\ell_k}{\partial \theta}$ refers to taking the partial derivative of $\bX^\ell_k$ with respect to the parameter $\theta$, while keeping the other arguments constant.  The quantity $\frac{\partial \E^{(n,L)}_{k,\ell}}{\partial \theta}$ denotes the contribution from the $k$-recurrent step at the $l$-th hidden layer of the deep RNN \eqref{eq:ucrn_SM} to the overall hidden state gradient at the step $n$. The vanishing gradient problem \citep{vanish_grad} arises if $\left | \frac{\partial \E^{(n,L)}_{k,\ell}}{\partial \theta} \right |$, defined in \eqref{eq:grad2_SM_vanish}, $\rightarrow 0$ exponentially fast in $k$, for $k << n$ (long-term dependencies). In that case, the RNN does not have long-term memory, as the contribution of the $k$-th hidden state at the $\ell$-th layer to error at time step $t_n$ is infinitesimally small. 

As argued in the main text, the vanishing gradient problem for RNNs focuses on the possible smallness of contributions of the gradient over a large number of recurrent steps. As this behavior of the gradient is independent of the number of layers, we start with a result on the vanishing gradient problem for a single hidden layer here. Also, for the sake of definiteness, we set the scalar parameter $\theta = \bw^{1,p}$ for some $1 \leq p \leq m$. Similar results also hold for any other $\theta \in {\bf \Theta}$. Moreover, we introduce the following \emph{order}-notation,
\begin{equation}
    \label{eq:ord}
    \begin{aligned}
   {\bf \beta} &= \ord(\gamma), \ {\rm for}~\gamma,\beta \in \R_+ \quad {\rm if~there~exists~constants}~ \overline{C},\underline{C}~{\rm such~that}~\underline{C} \gamma \leq \beta \leq \overline{C} \gamma. \\
   {\bf M} &= \ord(\gamma), \ {\rm for}~{\bf M} \in \R^{d_1 \times d_2}, \gamma \in \R_+ \quad {\rm if~there~exists~constant}~ \overline{C}~{\rm such~that}~\|{\bf M}\| \leq \overline{C} \gamma.
   \end{aligned}
\end{equation}

We restate Proposition 3.2 of the main text,
\begin{proposition}
\label{prop:4_SM}
Let $\by_n$ be the hidden states generated by the RNN \eqref{eq:ucrn_SM}. Then the gradient for long-term dependencies, i.e. $k << n$, satisfies the representation formula,
\begin{equation}
     \label{eq:glb_SM}
     \begin{aligned}
      \frac{\partial \E^{(n,1)}_{k,1}}{\partial \bw^{1,p}} &=
      -\Dt\hat{\sigma}(\bc^{1,p})^2t_n\sigma^{\prime}(\bA^{1,p}_{k-1})\by^{1,p}_{k-1}\left(\by^{1,p}_n-\overline{\by}^p_n\right) + \ord(\Dt^2).
     \end{aligned}
 \end{equation}
\end{proposition}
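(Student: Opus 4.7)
The plan is to start from the chain-rule decomposition
\begin{equation*}
    \frac{\partial \E^{(n,1)}_{k,1}}{\partial \bw^{1,p}} \;=\; \frac{\partial \E_n}{\partial \bX^1_n}\,\Bigl(\prod_{j=k+1}^n \frac{\partial \bX^1_j}{\partial \bX^1_{j-1}}\Bigr)\,\frac{\partial^{+}\bX^1_k}{\partial \bw^{1,p}}
\end{equation*}
and exploit the fact that, for a single layer, the neurons are uncoupled, so each Jacobian $\partial \bX^1_j/\partial \bX^1_{j-1}$ is block-diagonal with $2\times 2$ blocks $\bB^{1,i}_j$ of the form \eqref{eq:mat2}; the same holds for the product over $j$.

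I would first compute $\partial^{+}\bX^1_k/\partial \bw^{1,p}$ directly from the update \eqref{eq:ucrn_SM}, using the explicit formulas \eqref{eq:p3} and \eqref{eq:p7}. Since $\bw^{1,p}$ enters only the pre-activation of the $p$-th neuron, only the two slots corresponding to $(\by^{1,p}_k,\bz^{1,p}_k)$ are nonzero, with
\begin{equation*}
    \frac{\partial^{+}\bz^{1,p}_k}{\partial \bw^{1,p}} \;=\; -\Dt\,\hat{\sigma}(\bc^{1,p})\,\sigma'(\bA^{1,p}_{k-1})\,\by^{1,p}_{k-1}, \qquad \frac{\partial^{+}\by^{1,p}_k}{\partial \bw^{1,p}} \;=\; \Dt\,\hat{\sigma}(\bc^{1,p})\,\frac{\partial^{+}\bz^{1,p}_k}{\partial \bw^{1,p}} \;=\; \ord(\Dt^2).
\end{equation*}
Together with the observation that $\partial \E_n/\partial \bX^1_n$ is supported only on the $\by$-slots (entries $\by^{1,i}_n-\bar{\by}^i_n$), block-diagonality collapses the triple product to the $p$-th block, giving
\begin{equation*}
    \frac{\partial \E^{(n,1)}_{k,1}}{\partial \bw^{1,p}} \;=\; (\by^{1,p}_n-\bar{\by}^p_n)\Bigl[\bigl(\textstyle\prod_{j=k+1}^n\bB^{1,p}_j\bigr)_{11}\,\tfrac{\partial^{+}\by^{1,p}_k}{\partial \bw^{1,p}} + \bigl(\textstyle\prod_{j=k+1}^n\bB^{1,p}_j\bigr)_{12}\,\tfrac{\partial^{+}\bz^{1,p}_k}{\partial \bw^{1,p}}\Bigr].
\end{equation*}
The first bracket term is already $\ord(\Dt^2)$, since $\partial^{+}\by^{1,p}_k/\partial \bw^{1,p}=\ord(\Dt^2)$ and $\prod_{j=k+1}^n \bB^{1,p}_j$ has bounded operator norm as a direct consequence of \eqref{eq:mat5}. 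The main work then concentrates on extracting the leading behaviour of $\bigl(\prod_{j=k+1}^n \bB^{1,p}_j\bigr)_{12}$.

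To this end I would write $\bB^{1,p}_j = I + \Dt\,M^{1,p}_j$, where $(M^{1,p}_j)_{12} = \hat{\sigma}(\bc^{1,p})$ is $j$-independent while the remaining entries of $M^{1,p}_j$ are uniformly $\ord(1)$, and telescope
\begin{equation*}
    \prod_{j=k+1}^n\bB^{1,p}_j \;=\; I + \Dt\sum_{j=k+1}^n M^{1,p}_j + \bigl(\text{products of at least two factors of }\Dt\,M^{1,p}_j\bigr).
\end{equation*}
The single-sum contribution to the $(1,2)$-entry is $\Dt\sum_{j=k+1}^n \hat{\sigma}(\bc^{1,p}) = (n-k)\Dt\,\hat{\sigma}(\bc^{1,p})$, and for $k\ll n$ we may replace $(n-k)\Dt = t_n - t_k$ by $t_n + \ord(\Dt)$ since $t_k = k\Dt$ is subleading. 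Multiplying the resulting leading $(1,2)$-entry $t_n\hat{\sigma}(\bc^{1,p})$ by the $\ord(\Dt)$ factor $\partial^{+}\bz^{1,p}_k/\partial \bw^{1,p}$ and by $(\by^{1,p}_n-\bar{\by}^p_n)$ reproduces the stated leading term $-\Dt\,\hat{\sigma}(\bc^{1,p})^2\,t_n\,\sigma'(\bA^{1,p}_{k-1})\,\by^{1,p}_{k-1}(\by^{1,p}_n-\bar{\by}^p_n)$.

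The main obstacle is the careful bookkeeping of this Dyson-type expansion: I need to argue that every multi-factor product in the expansion of $(\prod_{j=k+1}^n \bB^{1,p}_j)_{12}$ contributes at order $\ord(\Dt)$ or smaller beyond the leading $(n-k)\Dt\,\hat{\sigma}(\bc^{1,p})$. This amounts to exploiting the structural observation that any path to the $(1,2)$-slot involving two or more off-diagonal hops collects at least one additional factor of $\Dt$ (either from the $(1,1)$-entry of some $M^{1,p}_j$ or from pairing a $(2,1)$-hop with another $(1,2)$-hop), while the combinatorial factors $(n-k)^r$ are controlled via $n\Dt \le T$ and Proposition~\ref{prop:31} gives pointwise bounds on $\by^{1,p}_{j-1}$ and hence on the coefficients $\bw^{1,p}\sigma'(\bA^{1,p}_{j-1})+\alpha$ appearing in $M^{1,p}_j$. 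Once this telescoping estimate is secured, assembling the stated representation is a routine substitution.
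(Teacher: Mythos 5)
Your proposal follows essentially the same route as the paper's proof: the same three-factor chain-rule decomposition, the same explicit computation of $\frac{\partial^{+}\bX^1_k}{\partial \bw^{1,p}}$ as in \eqref{eq:p42}, the same reduction to the $p$-th $2\times 2$ block combined with the support of $\frac{\partial \E_n}{\partial \bX^1_n}$ on the $\by$-slots, and the same expansion $\bB^{1,p}_j = I + \Dt\,M^{1,p}_j$ whose accumulated $(1,2)$-entry $(n-k)\Dt\,\hat{\sigma}(\bc^{1,p}) \approx t_n\hat{\sigma}(\bc^{1,p})$ yields the leading term. The one step you explicitly defer --- bookkeeping the multi-factor terms of the Dyson expansion (where, e.g., the alternating paths $1\to 2\to 1\to 2$ accumulate $\ord(t_n^3)$ rather than an extra power of $\Dt$ per se) --- is exactly the step the paper itself handles only by asserting the product formulas \eqref{eq:p47} and \eqref{eq:p51} ``by induction,'' so your attempt matches the paper's argument at the same level of rigor.
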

\begin{proof}
Following the definition \eqref{eq:grad2_SM_vanish} and as $L=1$ and $\theta = \bw^{1,p}$, we have,
\begin{equation}
    \label{eq:1l_p41}
    \frac{\partial \E^{(n,1)}_{k,1}}{\partial \bw^{1,p}} := \frac{\partial \E_n}{\partial \bX^1_n} \frac{\partial \bX^1_n}{\partial \bX^1_k} \frac{\partial^{+} \bX^1_k}{\partial \bw^{1,p}}.
\end{equation}
We will explicitly compute all three expressions on the right-hand-side of \eqref{eq:1l_p41}. To start with, using \eqref{eq:p1}, \eqref{eq:p2} and \eqref{eq:p3}, we obtain,
\begin{equation}
    \label{eq:p42}
    \begin{aligned}
    \frac{\partial^+\bX^1_k}{\partial \bw^{1,p}} &= \left[
0,0,\ldots,
\ldots,\frac{\partial^+\by^{1,p}_k}{\partial \bw^{1,p}},\frac{\partial^+\bz^{1,p}_k}{\partial \bw^{1,p}},\ldots,\ldots,0,0\right]^\top, \\
\left(\frac{\partial^+\bX^1_k}{\partial \bw^{1,p}}\right)_{2p-1} &=\frac{\partial^+\by^{1,p}_k}{\partial \bw^{1,p}} = -\Dt^2(\hat{\sigma}(\bc^{1,p}))^2\sigma^{\prime}(\bA^{1,p}_{k-1})\by^{1,p}_{k-1},  \\
\left(\frac{\partial^+\bX^1_k}{\partial \bw^{1,p}}\right)_{2p} &=\frac{\partial^+\bz^{1,p}_k}{\partial \bw^{1,p}} = -\Dt\hat{\sigma}(\bc^{1,p})\sigma^{\prime}(\bA^{1,p}_{k-1})\by^{1,p}_{k-1}.
\end{aligned}
\end{equation}
Using the product rule \eqref{eq:grad3} we have,
\begin{equation}
\label{eq:l1_p43}
 \frac{\partial \bX^1_n}{\partial \bX^1_k} = \prod\limits_{j=k+1}^n  \frac{\partial \bX^1_j}{\partial \bX^1_{j-1}}. 
\end{equation}
Observing from the expressions \eqref{eq:mat1} and \eqref{eq:mat2} and using the \emph{order}-notation \eqref{eq:ord}, we obtain that,
\begin{equation}
    \label{eq:p44}
\frac{\partial \bX^1_j}{\partial \bX^1_{j-1}} = {\bf I}_{2m \times 2m} + \Dt \bC^1_{j} + \ord(\Dt^2),     
\end{equation}
with ${\bf I}_{k \times k}$ is the $k \times k$ Identity matrix and the matrix $\bC^1_j$ defined by,
\begin{equation}
    \label{eq:p45}
    \frac{\partial \bX^L_j}{\partial \bX^L_{j-1}}= \begin{bmatrix}
    \bC^{1,1}_{j} &0 &\ldots &0 \\
    0&  \bC^{1,2}_{j} &\ldots &0 \\
    \ldots & \ldots & \ldots & \ldots \\
     \ldots & \ldots & \ldots & \ldots \\
     0 &\ldots & 0 &  \bC^{1,m}_{j}
    \end{bmatrix},
\end{equation}
with the block matrices $\bC^{1,i}_j \in \R^{2 \times 2}$ given by,
\begin{equation}
    \label{eq:p46}
    \bC^{1,i}_j  = \begin{bmatrix}
    0 & \hat{\sigma}(\bc^{1,i})  \\
    -\hat{\sigma}(\bc^{1,i})  \left(\bw^{1,i} \sigma^{\prime}(\bA^{1,i}_{j-1})+\alpha\right) & 0 
    \end{bmatrix}.
\end{equation}
By a straightforward calculation and the use of induction, we claim that 
\begin{equation}
    \label{eq:p47}
    \prod\limits_{j=k+1}^n  \frac{\partial \bX^1_j}{\partial \bX^1_{j-1}} = {\bf I}_{2m \times 2m} + \Dt \bC^1 + \ord(\Dt^2),  
\end{equation}
with 
\begin{equation}
    \label{eq:p48}
    \bC^1= \begin{bmatrix}
    \bC^{1,1} &0 &\ldots &0 \\
    0&  \bC^{1,2} &\ldots &0 \\
    \ldots & \ldots & \ldots & \ldots \\
     \ldots & \ldots & \ldots & \ldots \\
     0 &\ldots & 0 &  \bC^{1,m}
    \end{bmatrix},
\end{equation}
with the block matrices $\bC^{1,i} \in \R^{2 \times 2}$ given by,
\begin{equation}
    \label{eq:p49}
    \bC^{1,i}  = \begin{bmatrix}
    0 & (n-k)\hat{\sigma}(\bc^{1,i})  \\
    -(n-k)\alpha\hat{\sigma}(\bc^{1,i})  - \hat{\sigma}(\bc^{1,i})\bw^{1,i} \sum\limits_{j=k+1}^n\sigma^{\prime}(\bA^{1,i}_{j-1}) & 0 
    \end{bmatrix}.
\end{equation}
By the assumption that $k << n$ and using the fact that $t_n = n \Dt$, we have that,
\begin{equation}
    \label{eq:p50}
   \Dt \bC^{1,i}  = \begin{bmatrix}
    0 & t_n\hat{\sigma}(\bc^{1,i})  \\
    -t_n\alpha\hat{\sigma}(\bc^{1,i})  - \hat{\sigma}(\bc^{1,i})\bw^{1,i} \Dt\sum\limits_{j=k+1}^n\sigma^{\prime}(\bA^{1,i}_{j-1}) & 0 
    \end{bmatrix}.
    \end{equation}
Hence, the non-zero entries in the block matrices can be $\ord(1)$. Therefore, the product formula \eqref{eq:p47} is modified to,
\begin{equation}
    \label{eq:p51}
    \prod\limits_{j=k+1}^n  \frac{\partial \bX^1_j}{\partial \bX^1_{j-1}} = \bC + \ord(\Dt),  
\end{equation}
    with the $2m \times 2m$ matrix $\bC$ defined as,
   \begin{equation}
    \label{eq:p52}
    \bC= \begin{bmatrix}
    \bC^{1} &0 &\ldots &0 \\
    0&  \bC^{2} &\ldots &0 \\
    \ldots & \ldots & \ldots & \ldots \\
     \ldots & \ldots & \ldots & \ldots \\
     0 &\ldots & 0 &  \bC^{m}
    \end{bmatrix},
\end{equation} 
and,
\begin{equation}
    \label{eq:p53}
   \bC^{i}  = \begin{bmatrix}
    1 & t_n\hat{\sigma}(\bc^{1,i})  \\
    -t_n\alpha\hat{\sigma}(\bc^{1,i})  - \hat{\sigma}(\bc^{1,i})\bw^{1,i} \Dt\sum\limits_{j=k+1}^n\sigma^{\prime}(\bA^{1,i}_{j-1}) & 1 
    \end{bmatrix}.
    \end{equation}
Thus by taking the product of \eqref{eq:p51} with \eqref{eq:p42}, we obtain that,
\begin{equation}
    \label{eq:p54}
   \prod\limits_{j=k+1}^n  \frac{\partial \bX^1_j}{\partial \bX^1_{j-1}} \frac{\partial^+\bX^1_k}{\partial \bw^{1,p}} = \left[
0,0,\ldots,
\ldots,\frac{\partial^+\by^{1,p}_k}{\partial \bw^{1,p}}+\bC^p_{12}\frac{\partial^+\bz^{1,p}_k}{\partial \bw^{1,p}},\bC^p_{21}\frac{\partial^+\by^{1,p}_k}{\partial \bw^{1,p}}+\frac{\partial^+\bz^{1,p}_k}{\partial \bw^{1,p}}\ldots,\ldots,0,0\right]^\top + \ord(\Dt^2),
\end{equation}
with $\bC^p_{12},\bC^p_{21}$ are the off-diagonal entries of the corresponding block matrix, defined in \eqref{eq:p53}. Note that the $\ord(\Dt^2)$ remainder term arises from the $\Dt$-dependence in \eqref{eq:p42}.  

From \eqref{eq:1gd}, we have that,
\begin{equation}
    \label{eq:p61}
  \frac{\partial \E_n}{\partial \bX^1_n} = \left[\by^{1,1}_n-\overline{\by}^1_n,0,\ldots,\by^{1,i}_n-\overline{\by}^i_n,0,\ldots,\by^{1,m}_n-\overline{\by}^m_n,0\right].
  \end{equation}
 Therefore, taking the products of \eqref{eq:p61} and \eqref{eq:p54} and substituting the explicit expressions in \eqref{eq:p42}, we obtain the desired identity \eqref{eq:glb_SM}. 
 \end{proof}
 \subsection{On the vanishing gradient problem for the multilayer version of UnICORNN.}
 The explicit representation formula \eqref{eq:glb_SM} holds for 1 hidden layer in \eqref{eq:ucrn_SM}. What happens when additional hidden layers are stacked together as in UnICORNN \eqref{eq:ucrn_SM}? To answer this question, we consider the concrete case of $L=3$ layers as this is the largest number of layers that we have used in the context of UnICORNN with fully connected stacked layers. As before, we set the scalar parameter $\theta = \bw^{1,p}$ for some $1 \leq p \leq m$. Similar results also hold for any other $\theta \in {\bf \Theta}$. We have the following representation formula for the gradient in this case,
 \begin{proposition}
\label{prop:5}
Let $\by_n$ be the hidden states generated by the RNN \eqref{eq:ucrn_SM}. The gradient for long-term dependencies satisfies the representation formula,
\begin{equation}
     \label{eq:glb1}
     \begin{aligned}
      \frac{\partial \E^{(n,3)}_{k,1}}{\partial \bw^{1,p}} &=
      \Dt^4\hat{\sigma}(\bc^{1,p})t_n\frac{\partial^+\bz^{1,p}_k}{\partial \bw^{1,p}}\sum\limits_{i=1}^m \bar{\bf G}_{2i-1,2p-1}\left(\by^{3,i}-\overline{\by}^i\right) + \ord(\Dt^6),
     \end{aligned}
 \end{equation}
 with the coefficients given by,
 \begin{equation}
     \label{eq:glbcoeff_SM}
     \begin{aligned}
    \frac{\partial^+\bz^{1,p}_k}{\partial \bw^{1,p}} &= -\Dt\hat{\sigma}(\bc^{1,p})\sigma^{\prime}(\bA^{1,p}_{k-1})\by^{1,p}_{k-1}, \\
     \bar{\bf G}_{2i-1,2p-1} &= \sum\limits_{j=1}^m {\bf G}^3_{ij} {\bf G}^2_{jp}, \quad \forall~1 \leq i \leq m, \quad {\bf G}^{q}_{r,s} = - (\hat{\sigma}(\bc^{q,r}))^2 \sigma^{\prime}\left(\bA^{q,r}_{n-1}\right)\bV^q_{rs}, \quad q=2,3. 
     \end{aligned}
 \end{equation}
\end{proposition}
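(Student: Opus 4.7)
The plan is to extend the proof of the single-hidden-layer case, Proposition~\ref{prop:4_SM}, to the three-layer setting. Starting from the chain-rule decomposition \eqref{eq:grad2_SM_vanish} with $L=3$ we have
\begin{equation*}
\frac{\partial \E^{(n,3)}_{k,1}}{\partial \bw^{1,p}} = \frac{\partial \E_n}{\partial \bX^3_n}\, \frac{\partial \bX^3_n}{\partial \bX^1_k}\, \frac{\partial^{+} \bX^1_k}{\partial \bw^{1,p}},
\end{equation*}
and the strategy is to expand each factor in $\Dt$ and retain only the $\ord(\Dt^5)$ leading contribution. The left factor is the three-layer analogue of \eqref{eq:1gd}, supported only at odd indices, and the right factor is already available in \eqref{eq:p42}: it is a sparse vector with an $\ord(\Dt^2)$ entry at position $2p-1$ equal to $\Dt\hat{\sigma}(\bc^{1,p})\,\partial^{+}\bz^{1,p}_k/\partial \bw^{1,p}$ and an $\ord(\Dt)$ entry at position $2p$ equal to $\partial^{+}\bz^{1,p}_k/\partial \bw^{1,p}$.

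For the central Jacobian I would use the path decomposition that places the two inter-layer Jacobians at time $n$, namely
\begin{equation*}
\frac{\partial \bX^3_n}{\partial \bX^1_k} = \frac{\partial \bX^3_n}{\partial \bX^2_n}\,\frac{\partial \bX^2_n}{\partial \bX^1_n}\,\prod_{j=k+1}^{n}\frac{\partial \bX^1_j}{\partial \bX^1_{j-1}},
\end{equation*}
since this is precisely the decomposition whose inter-layer blocks are indexed by $\bA^{q,\cdot}_{n-1}$, matching the definition of ${\bf G}^q$ in the statement. The time-recurrent product at layer $1$ is then handled by the induction that produced \eqref{eq:p47}--\eqref{eq:p53}: it equals a block-diagonal matrix whose $2\times 2$ blocks agree with the identity except for an off-diagonal entry $t_n\hat{\sigma}(\bc^{1,i})$, plus a global $\ord(\Dt)$ remainder. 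The two inter-layer Jacobians have the block-sparse form \eqref{eq:mat3} with the crucial zero-column property: every even column vanishes and only the odd columns carry the $\bD$-type $\ord(\Dt^2)$ and $\bE$-type $\ord(\Dt)$ entries.

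Multiplying the three factors from right to left is then a matter of careful bookkeeping. The time-product's off-diagonal entry $t_n\hat{\sigma}(\bc^{1,p})$ mixes the $\ord(\Dt)$ entry at position $2p$ into position $2p-1$, producing a surviving odd-indexed component of magnitude $t_n\hat{\sigma}(\bc^{1,p})\,\partial^{+}\bz^{1,p}_k/\partial \bw^{1,p}$. Because every even column of the next inter-layer Jacobian $\partial \bX^2_n/\partial \bX^1_n$ is zero, only this odd-indexed survivor propagates through; after $\partial \bX^3_n/\partial \bX^2_n$ the $(2i-1)$-entry of the combined vector equals $\Dt^4\,t_n\,\hat{\sigma}(\bc^{1,p})\,\partial^{+}\bz^{1,p}_k/\partial \bw^{1,p}\,\sum_j {\bf G}^3_{ij}{\bf G}^2_{jp}$ at leading order, which is precisely the factor $\bar{\bf G}_{2i-1,2p-1}$ appearing in \eqref{eq:glb1}. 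Contracting with $\partial \E_n/\partial \bX^3_n$ then yields the claimed representation.

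The main obstacle will be confirming that every other combination sits in $\ord(\Dt^6)$. Two sources of error must be controlled: the $\ord(\Dt)$ remainder of the time-product, which contributes at most $\ord(\Dt^6)$ once multiplied by the $\ord(\Dt^4)$-sparse structure produced by the inter-layer Jacobians; and the $\bD$-type entries of the inter-layer blocks, which carry an extra factor of $\Dt$ compared with the $\bE$-type entries and therefore contribute only to non-dominant even entries of the propagated vector, subsequently annihilated by the zero rows of $\partial \E_n/\partial \bX^3_n$. A secondary subtlety is justifying the path decomposition used here, since the product formula \eqref{eq:grad3} applied in Proposition~\ref{prop:4_SM} literally places the inter-layer factors at time $k$; this can be resolved either by enumerating the finitely many mixed-order paths in the computational graph from $(1,k)$ to $(3,n)$ and checking that each accumulates enough additional factors of $\Dt$ to sit in $\ord(\Dt^6)$, or by invoking Lipschitz smoothness of $\sigma'$ together with the uniform hidden-state bounds of Proposition~\ref{prop:31} to absorb the difference between $\sigma'(\bA^{q,\cdot}_{k-1})$ and $\sigma'(\bA^{q,\cdot}_{n-1})$ into the remainder.
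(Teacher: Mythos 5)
Your proposal follows essentially the same route as the paper's proof of Proposition~\ref{prop:5}: the same chain-rule factorization \eqref{eq:p41} with the inter-layer Jacobians placed at time $n$ as in \eqref{eq:p43}, the same reuse of the single-layer time-product asymptotics \eqref{eq:p51}--\eqref{eq:p53}, and the same parity bookkeeping that isolates the surviving $\ord(\Dt^5)$ term; the paper likewise takes the time-$n$ decomposition for granted, so your concern about reconciling it with \eqref{eq:grad3} is, if anything, more scrupulous than the source. One slip in your remainder analysis should be fixed: you have the roles of the $\bD$- and $\bE$-type entries reversed. The $\bD$-type entries (the $\ord(\Dt^2)$ ones, i.e.\ $\Dt^2{\bf G}^{q,n}$ in \eqref{eq:p55}--\eqref{eq:p56}) sit in the \emph{odd} rows of \eqref{eq:mat3} and are exactly what produces the surviving factor $\bar{\bf G}={\bf G}^{3,n}{\bf G}^{2,n}$, while the $\bE$-type entries ($\ord(\Dt)$, i.e.\ $\Dt{\bf H}^{q,n}$) populate the even rows, give rise to the $\Dt^{-1}\bar{\bf H}$ piece of \eqref{eq:p59}, and are the ones annihilated by the zero even entries of $\partial\E_n/\partial\bX^3_n$ in \eqref{eq:p61}. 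Since your main computation already builds the leading term from the correct ($\bD$-type) entries, this is a mislabeling rather than a gap, but as written the error-control paragraph contradicts the computation it is meant to justify.
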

\begin{proof}
Following the definition \eqref{eq:grad2_SM_vanish} and as $L=3$ and $\theta = \bw^{1,p}$, we have,
\begin{equation}
    \label{eq:p41}
    \frac{\partial \E^{(n,3)}_{k,1}}{\partial \bw^{1,p}} := \frac{\partial \E_n}{\partial \bX^3_n} \frac{\partial \bX^3_n}{\partial \bX^1_k} \frac{\partial^{+} \bX^1_k}{\partial \bw^{1,p}}.
\end{equation}
We will explicitly compute all three expressions on the right-hand-side of \eqref{eq:p41}.

In \eqref{eq:p42}, we have already explicitly computed the right most expression in the RHS of \eqref{eq:p41}. 
Using the product rule \eqref{eq:grad3} we have,
\begin{equation}
\label{eq:p43}
 \frac{\partial \bX^3_n}{\partial \bX^1_k} = \frac{\partial \bX^3_n}{\partial \bX^2_n} \frac{\partial \bX^2_n}{\partial \bX^1_n}\prod\limits_{j=k+1}^n  \frac{\partial \bX^1_j}{\partial \bX^1_{j-1}}. 
\end{equation}
Note that we have already obtained an explicit representation formula for $\prod\limits_{j=k+1}^n  \frac{\partial \bX^1_j}{\partial \bX^1_{j-1}} $ in \eqref{eq:p51}. 

Next we consider the matrices $\frac{\partial \bX^{3}_n}{\partial \bX^{2}_n}$ and $\frac{\partial \bX^{2}_n}{\partial \bX^{1}_n}$. By the representation formula \eqref{eq:mat3}, we have the following decomposition for any $1 \leq q \leq n$,
\begin{equation}
    \label{eq:p55}
     \frac{\partial \bX^q_n}{\partial \bX^{q-1}_{n}} = \Dt^2 {\bf G}^{q,n} + \Dt H^{q,n},
\end{equation}
with,
\begin{equation}
    \label{eq:p56}
 {\bf G}^{q,n}     = \begin{bmatrix}
     {\bf G}^{q,n}_{11} & 0 & {\bf G}^{q,n}_{12} & 0 &\ldots &\ldots &{\bf G}^{q,n}_{1m} & 0 \\
     0 & 0 & 0 & 0 &\ldots &\ldots & 0 & 0 \\
     \ldots & \ldots & \ldots & \ldots & \ldots & \ldots & \ldots & \ldots \\
      \ldots & \ldots & \ldots & \ldots & \ldots & \ldots & \ldots & \ldots \\
      {\bf G}^{q,n}_{m1} & 0 & {\bf G}^{q,n}_{m2} & 0 &\ldots &\ldots &{\bf G}^{q,n}_{mm} & 0 \\ 
      0 & 0 & 0 & 0 &\ldots &\ldots & 0 & 0 
\end{bmatrix}, \quad {\bf G}^{q,k}_{i,\bar{i}} = - (\hat{\sigma}(\bc^{q,i}))^2 \sigma^{\prime}\left(\bA^{q,i}_{n-1}\right)\bV^q_{i\bar{i}},
\end{equation}
and
\begin{equation}
    \label{eq:p57}
 {\bf H}^{q,n}     = \begin{bmatrix}
  0 & 0 & 0 & 0 &\ldots &\ldots & 0 & 0 \\
     {\bf H}^{q,n}_{11} & 0 & {\bf H}^{q,n}_{12} & 0 &\ldots &\ldots &{\bf H}^{q,n}_{1m} & 0 \\
     \ldots & \ldots & \ldots & \ldots & \ldots & \ldots & \ldots & \ldots \\
      \ldots & \ldots & \ldots & \ldots & \ldots & \ldots & \ldots & \ldots \\
       0 & 0 & 0 & 0 &\ldots &\ldots & 0 & 0 \\
      {\bf H}^{q,n}_{m1} & 0 & {\bf H}^{q,n}_{m2} & 0 &\ldots &\ldots &{\bf H}^{q,n}_{mm} & 0
\end{bmatrix}, \quad {\bf H}^{q,k}_{i,\bar{i}} = - \hat{\sigma}(\bc^{q,i}) \sigma^{\prime}\left(\bA^{q,i}_{n-1}\right)\bV^q_{i\bar{i}}.
\end{equation}
It is straightforward to see from \eqref{eq:p57} and \eqref{eq:p56} that,
\begin{equation}
\label{eq:p70}
{\bf H}^{3,n}{\bf H}^{2,n} \equiv {\bf 0}_{2m \times 2m}, \quad 
{\bf G}^{3,n}{\bf H}^{2,n} \equiv {\bf 0}_{2m \times 2m},
\end{equation}
and the entries of the $2m \times 2m$ matrix $\bar{\bf G} = {\bf G}^{3,n}{\bf G}^{2,n}$ are given by,
\begin{equation}
    \label{eq:p58}
    \bar{\bf G}_{2r-1,2s-1} = \sum\limits_{j=1}^m {\bf G}^{3,n}_{r,j} {\bf G}^{2,n}_{j,s}, \quad \bar{\bf G}_{2r-1,2s} = \bar{\bf G}_{2r,2s-1} = \bar{\bf G}_{2r,2s} = 0, \quad \forall~1\leq r,s\leq m,
\end{equation}
while the entries of the $2m \times 2m$ matrix $\bar{\bf H} = {\bf H}^{3,n}{\bf G}^{2,n}$ are given by
\begin{equation}
    \label{eq:H_bar}
    \bar{\bf H}_{2r,2s-1} = \sum\limits_{j=1}^m {\bf H}^{3,n}_{r,j} {\bf G}^{2,n}_{j,s}, \quad \bar{\bf H}_{2r-1,2s-1} = \bar{\bf H}_{2r-1,2s} = \bar{\bf H}_{2r,2s} = 0, \quad \forall~1\leq r,s\leq m.
\end{equation}

Hence we have,
\begin{equation}
    \label{eq:p59}
    \frac{\partial \bX^{3}_n}{\partial \bX^{2}_n}\frac{\partial \bX^{2}_n}{\partial \bX^{1}_n} = \Dt^4 (\bar{\bf G} + \Dt^{-1} \bar{\bf H}). 
\end{equation}
Taking the matrix-vector product of \eqref{eq:p59} with \eqref{eq:p54}, we obtain
\begin{equation}
    \label{eq:p600}
    \begin{aligned}
    \frac{\partial \bX^{3}_n}{\partial \bX^{1}_k}\frac{\partial^+\bX^1_k}{\partial \bw^{1,p}} &= \Dt^4\left(\frac{\partial^+\by^{1,p}_k}{\partial \bw^{1,p}}+\bC^p_{12}\frac{\partial^+\bz^{1,p}_k}{\partial \bw^{1,p}}\right)\left[\bar{\bf G}_{1,2p-1},\Dt^{-1}\bar{\bf H}_{2,2p-1},\ldots, \bar{\bf G}_{2m-1,2p-1},\Dt^{-1}\bar{\bf H}_{2m,2p-1}\right]^\top +\ord(\Dt^6) \\&=\Dt^4\bC^p_{12}\frac{\partial^+\bz^{1,p}_k}{\partial \bw^{1,p}}\left[\bar{\bf G}_{1,2p-1},\Dt^{-1}\bar{\bf H}_{2,2p-1},\ldots,\bar{\bf G}_{2m-1,2p-1},\Dt^{-1}\bar{\bf H}_{2m,2p-1}\right]^\top + \ord(\Dt^6),
    \end{aligned}
\end{equation}
where the last identify follows from the fact that $\frac{\partial^+\by^{1,p}_k}{\partial \bw^{1,p}} = \ord(\Dt^2)$.

 Therefore, taking the products of \eqref{eq:p61} and \eqref{eq:p600}, we obtain the desired identity \eqref{eq:glb1}. 
 
\end{proof}
An inspection of the representation formula \eqref{eq:glb1} shows that as long as the weights are $\ord(1)$ and from the bounds \eqref{eq:hsbd}, we know that $\by \sim \ord(1)$, the gradient $$
\frac{\partial \E^{(n,3)}_{k,1}}{\partial \bw^{1,p}} \sim \ord(\Dt^5),
$$
where the additional $\Dt$ stems from the $\Dt$-term in \eqref{eq:p42}. Thus the gradient does not depend on the recurrent step $k$. Hence, there is no vanishing gradient problem with respect to the number of recurrent connections, even in the multi-layer case. 

However, it is clear from the representation formulas \eqref{eq:glb_SM} and \eqref{eq:glb1}, as well as the proof of proposition \ref{prop:5} that for $L$-hidden layers in UnICORNN \eqref{eq:ucrn_SM}, we have,
\begin{equation}
\label{eq:scl}
\frac{\partial \E^{(n,L)}_{k,1}}{\partial \bw^{1,p}} \sim \ord\left(\Dt^{2L-1}\right).
\end{equation}
Thus, the gradient can become very small if too many layers are stacked together. This is not at all surprising as such a behavior occurs even if there are no recurrent connections in UnICORNN \eqref{eq:ucrn_SM}. In that case, we simply have a fully connected deep neural network and it is well-known that the gradient can vanish as the number of layers increases, making it harder to train deep networks. 
\subsection{Residual stacking of layers in UnICORNN.}
\label{resnet_grad}
Given the above considerations, it makes imminent sense to modify the fully-connected stacking of layers in UnICORNN \eqref{eq:ucrn_SM} if a moderately large number of layers ($L \geq 4$) are used. It is natural to modify the fully-connected stacking with a residual stacking, see \cite{deep_indrnn}. We use the following form of residual stacking,
\begin{align} 
\label{eq:ucrn_res}
\by_n^{\ell} &= \by_{n-1}^{\ell} + \Dt\hat{\sigma}(\bc^{\ell})\odot \bz_n^{\ell}, \\
\bz_n^{\ell} &= \bz_{n-1}^{\ell} - \Dt\hat{\sigma}(\bc^\ell)\odot[\sigma\left(\bw^\ell \odot \by_{n-1}^\ell + \bx_n^{\ell} + \bb^l \right) +\alpha \by_{n-1}^{\ell}],
\end{align}
where the input $\bx_n^{\ell}$ corresponds to a residual connection skipping $S$ layers, i.e.
\begin{align*}
    \bx_n^{\ell} = \begin{cases} {\bf \Lambda}^{\ell} \by_n^{\ell-S-1} + \bV^\ell \by_n^{\ell-1}, & \text{for } l>S \\ \bV^\ell \by_n^{\ell-1}, & \text{for } l\leq S
    \end{cases}.
\end{align*}
The number of skipped layers is $2 \leq S$ and $\Lambda^{\ell} \in \R^{m \times m}$ is a trainable matrix. 

The main advantages of using a residual staking such as \eqref{eq:ucrn_res} is to alleviate the vanishing gradient problem that arises from stacking multiple layers together and obtain a better scaling of the gradient than \eqref{eq:scl}. To see this, we can readily follow the proof of proposition \ref{prop:5}, in particular the product,
\begin{equation}
    \label{eq:res1}
    \frac{\partial \bX^L_n}{\partial \bX^1_n} = \prod\limits_{s=1}^{\nu} \frac{\partial \bX^{L-(s-1)S}_n}{\partial \bX^{L-sS}_k}\prod\limits_{\ell=2}^{L-\nu S}  \frac{\partial \bX^\ell_n}{\partial \bX^{\ell-1}_n} + \prod\limits_{\ell=1}^{L-1} \frac{\partial \bX^{\ell+1}_n}{\partial \bX^{\ell}_k},
\end{equation}
with,
\begin{equation}
\label{eq:nu}
    \nu = \begin{cases}
    \left[ \frac{L}{S}\right], &{\rm if}\quad L\ {\rm mod}\ S \neq 0, \\
    \left[ \frac{L}{S}\right]-1, &{\rm if}\quad L\ {\rm mod}\ S = 0.
    \end{cases}
\end{equation}
Here $[x] \in {\mathbb N}$ is the largest natural number less than or equal to $x \in \R$.

Given the additive structure in the product of gradients and using induction over matrix products as in \eqref{eq:p70} and \eqref{eq:p58}, we can compute that,
\begin{equation}
    \label{eq:res2}
    \frac{\partial \bX^L_n}{\partial \bX^1_n} = \ord\left(\Dt^{2(\nu+L-\nu S-1})\right) + \ord\left(\Dt^{2(L-1)}\right).
    \end{equation}
    
By choosing $S$ large enough, we clearly obtain that $\nu+L-\nu S-1 < L-1$. Hence by repeating the arguments of the proof of proposition \ref{prop:5}, we obtain that to leading order, the gradient of the residual stacked version of UnICORNN scales like,
\begin{equation}
\label{eq:scl_res}
\frac{\partial \E^{(n,L)}_{k,1}}{\partial \bw^{1,p}} \sim \ord\left(\Dt^{2\nu + 2L - 2\nu S -1}\right).
\end{equation}
Note that \eqref{eq:scl_res} is far more favorable scaling for the gradient than the scaling \eqref{eq:scl} for a fully connected stacking. As a concrete example, let us consider $L=7$ i.e., a network of $7$ stacked layers of UniCORNN. From \eqref{eq:scl}, we see that the gradient scales like $\ord(\Dt^{13})$ in this case. Even for a very moderate values of $\Dt < 1$, this gradient will be very small and will ensure that the first layer will have very little, if any, influence on the loss function gradients. On the other hand, for the same number of layers $L=7$, let us consider the residual stacking \eqref{eq:ucrn_res} with $S=3$ skipped connections. In this case $\nu=2$ and one directly concludes from \eqref{eq:scl_res} that the gradient scales like $\ord(\Dt^5)$, which is significantly larger than the gradient for the fully connected version of UnICORNN. In fact, it is exactly the same as the gradient scaling for fully connected UnICORNN \eqref{eq:ucrn_SM} with $3$ hidden layers \eqref{eq:glb1}. Thus, introducing skipped connections enabled the gradient to behave like a shallower fully-connected network, while possibly showing the expressivity of a deeper network.

\section{Chaotic time-series prediction: Lorenz 96 system}
It is instructive to explore limitations of the proposed UnICORNN. It is straightforward to prove, along the lines of the proof of proposition \ref{prop:31}, that the UnICORNN architecture does not exhibit chaotic behavior with respect to changes in the input. While this property is highly desirable for many applications where a slight change in the input should not lead to a major (possibly unbounded) change in the output, it impairs the performance on tasks where an actual chaotic system has to be learned.

Following the experiment in \cite{coRNN}, we aim to predict future states of a dynamical system, following the Lorenz 96 system \citep{lorenz96}:
\begin{align}
\label{eq:lorenz}
    x^\prime_j = (x_{i+1} - x_{i-2})x_{i-1} - x_i + F,
\end{align}
where $x_j\in \mathbb{R}$ for all $j=1,\dots,5$ and $F$ is an external force controlling the level of chaos in the system. 

We consider two different choices for the external force, namely $F=0.9$ and $F=8$. While the first one produces non-chaotic trajectories, the latter leads to a highly chaotic system. We discretize the system exactly along the lines of \cite{coRNN}, resulting in $128$ sequences of length $2000$ for each the training, validation and testing set.
\begin{table}[h!]
  \caption{Test NRMSE on the Lorenz 96 system \eqref{eq:lorenz} for UnICORNN, coRNN and LSTM.}
  \label{tab:lorenz}
  \centering
  \begin{tabular}{lllll}
    \toprule
    \cmidrule(r){1-5}
    Model & $F=0.9$ & $F=8$ & \# units & \# params \\
        \midrule
    LSTM \cite{coRNN} & $2.0\times 10^{-2}$& $6.8\times 10^{-2}$ & 44 & 9k\\
    coRNN \cite{coRNN} & $2.0\times 10^{-2}$ & $9.8\times 10^{-2}$ & 64 & 9k\\
    UnICORNN ($L$=$2$) & $2.2\times 10^{-2}$ & $3.1\times 10^{-1}$ & 90 & 9k\\
    \bottomrule
  \end{tabular}
\end{table}
\Tref{tab:lorenz} shows the normalized root mean square error (NRMSE) for UnICORNN as well as for coRNN and an LSTM, where all models have $9$k parameters. We can see that UnICORNN performs comparably to coRNN and LSTM in the chaos-free regime (i.e. $F=0.9$), while performing poorly compared to an LSTM when the system exhibits chaotic behavior (i.e. $F=8$). This is not surprising, as LSTMs are shown to be able to exhibit chaotic behavior \citep{chaotic_lstm}, while coRNN and UnICORNN are not chaotic by design.
This shows also numerically that UnICORNN should not be used for chaotic time-series prediction.

\section{Further experimental results}
As we compare the results of the UnICORNN to the results of other recent RNN architecture, where only the best results of each RNN were published for the psMNIST, noise padded CIFAR-10 and IMDB task, we as well show the best (based on a validation set) obtained results for the UnICORNN in the main paper. However, distributional results, i.e. statistics of several re-trainings of the best performing UnICORNN based on different random initialization of the trainable parameters, provide additional insights into the performance. \Tref{tab:distr_results} shows the mean and standard deviation of 10 re-trainings of the best performing UnICORNN for the psMNIST, noise padded CIFAR-10 and IMDB task. We can see that in all experiments the standard deviation of the re-trainings are relatively low, which underlines the robustness of our presented results. 

\begin{table}[h!]
  \caption{Distributional information (mean and standard deviation) on the results for the classification experiment presented in the paper, where only the best results is shown, based on 10 re-trainings of the best performing UnICORNN using different random seeds.}
  \label{tab:distr_results}
  \centering
  \begin{tabular}{lllll}
    \toprule
    \cmidrule(r){1-3}
    Experiment & Mean & Standard deviation \\
        \midrule
    psMNIST (128 units) & 97.7\% &  0.09\%\\
    psMNIST (256 units) & 98.2\% & 0.22\% \\
    Noise padded CIFAR-10 & 61.5\% & 0.52\%\\
    IMDB & 88.1\% & 0.19\% \\
    \bottomrule
  \end{tabular}
\end{table}

As emphasized in the main paper and in the last section, naively stacking of many layers for the UnICORNN might result in a vanishing gradient for the deep multi-layer model, due to the vanishing gradient problem of stacking many (not necessarily recurrent) layers. Following section \ref{resnet_grad}, one can use skipped residual connections and we see that the estimate on the gradients scale preferably when using residual connections compared to a naively stacking, when using many layers. To test this also numerically, we train a standard UnICORNN \eqref{eq:ucrn_SM} as well as a residual UnICORNN (res-UnICORNN) \eqref{eq:ucrn_res}, with $S=2$ skipping layers, on the noise padded CIFAR-10 task.  
\fref{fig:cifar} shows the test accuracy (mean and standard deviation) of the best resulting model for different number of network layers $L=3,\dots,6$, for the standard UnICORNN and res-UnICORNN. We can see that while both models seem to perform comparably for using only few layers, i.e. $L=3,4$, the res-UnICORNN with $S=2$ skipping connections outperforms the standard UnICORNN when using more layers, i.e. $L=5,6$. In particular, we can see that the standard UnICORNN is not able to significantly improve the test accuracy when using more layers, while the res-UnICORNN seems to obtain higher test accuracies when using more layers. 

Moreover, \fref{fig:cifar} also shows the test accuracy for a UnICORNN with an untrained time-step vector $\bc$, resulting in a UnICORNN without the multi-scale property generated by the time-step. We can see that the UnICORNN without the multi-scale feature is inferior in performance, to the standard UnICORNN as well as its residual counterpart.   

\begin{figure}[ht!]
\centering
\begin{minipage}[t]{.48\textwidth}
\includegraphics[width=1.\textwidth]{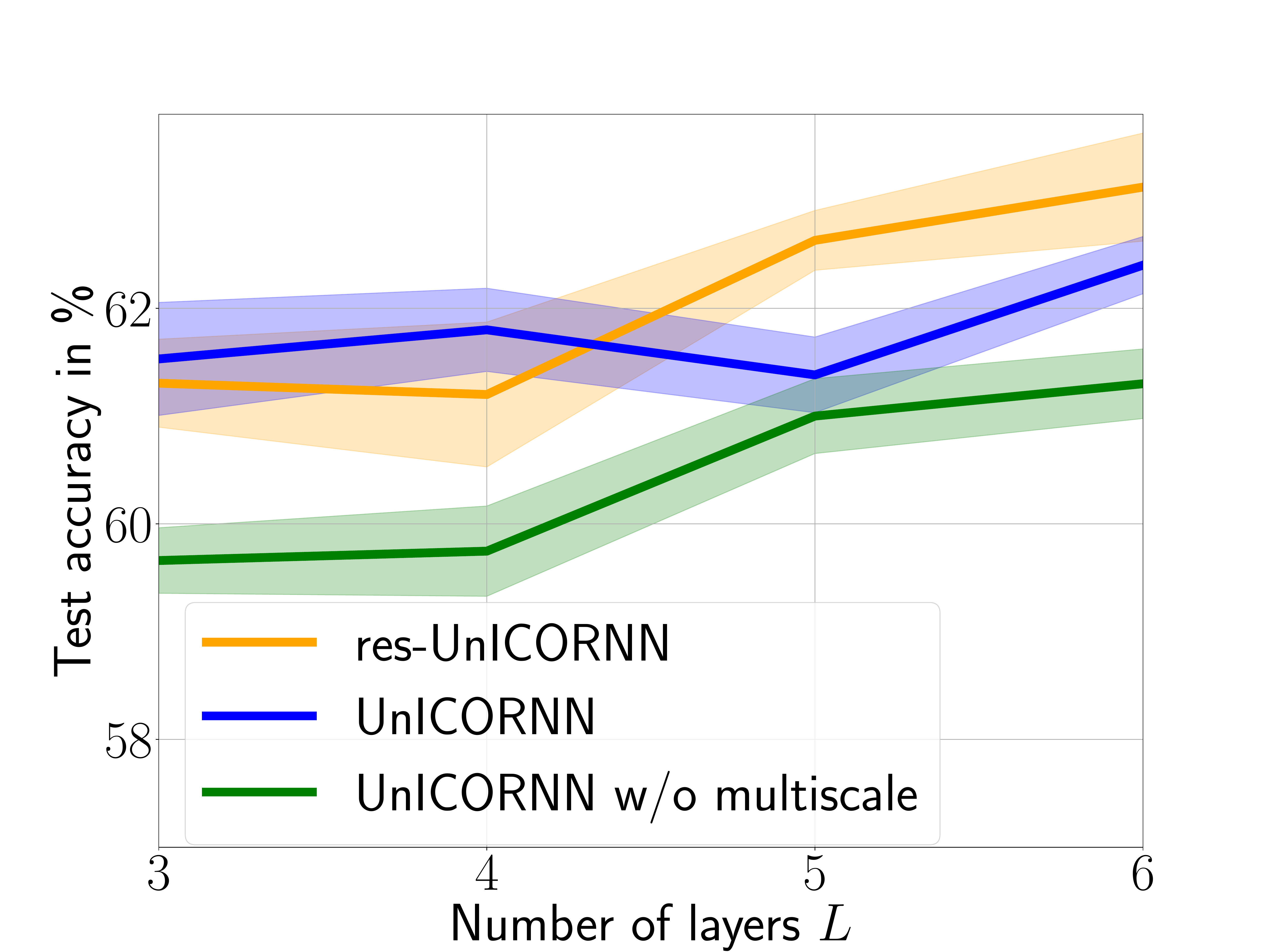}
\caption{Test accuracies (mean and standard deviation of 10 re-trainings of the best performing model) of the standard UnICORNN, res-UnICORNN and UnICORNN without multi-scale behavior on the noise padded CIFAR-10 experiment for different number of layers $L$.}
\label{fig:cifar}
\end{minipage}%
\hspace{0.01\textwidth}
\begin{minipage}[t]{.48\textwidth}
\includegraphics[width=1.\textwidth]{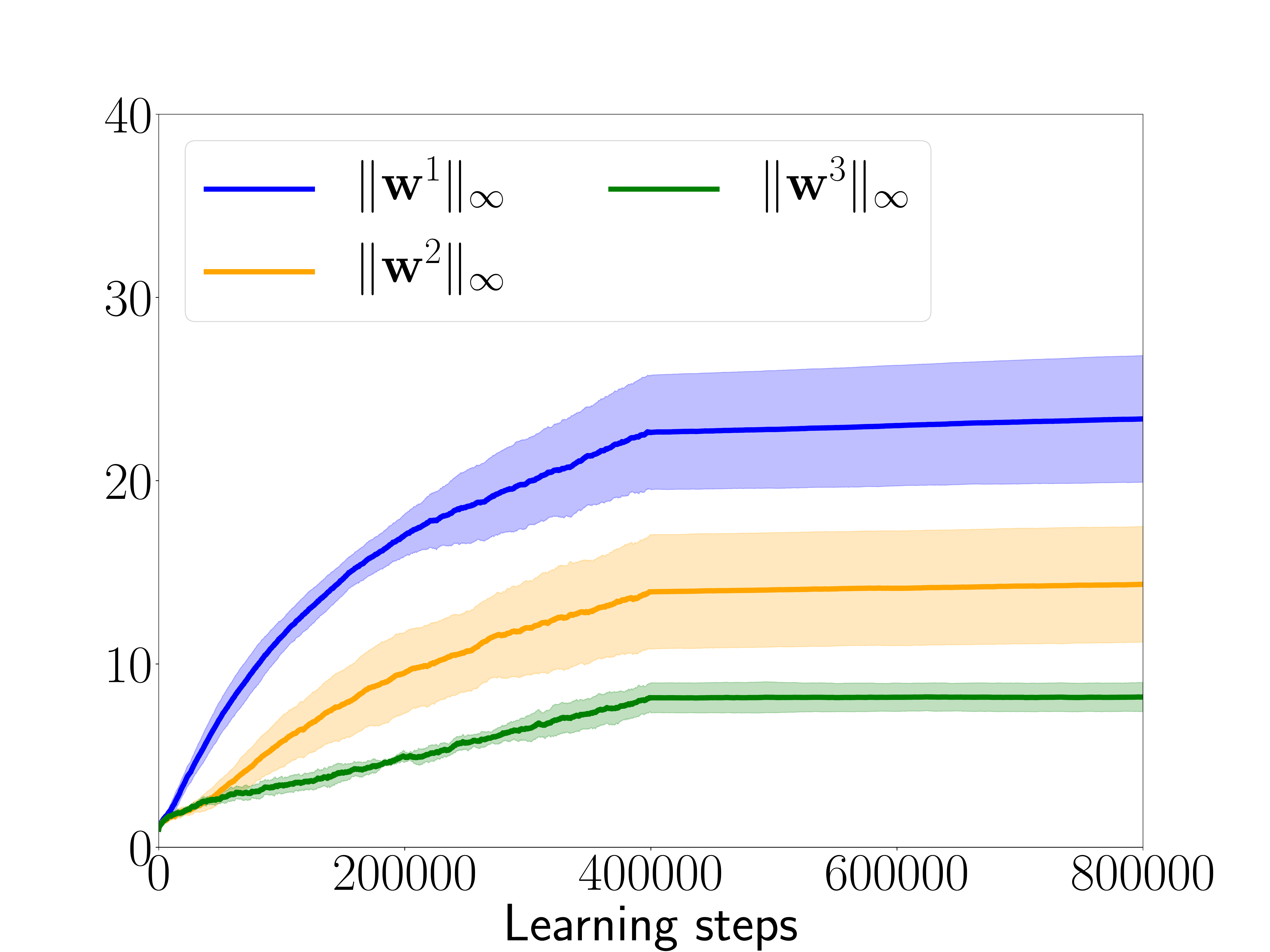}
\caption{Norms (mean and standard deviation of 10 re-trainings) of the hidden weights $\|\bw^l\|_\infty$, for $l=1,2,3$, of the UnICORNN during training.}
\label{fig:weights}
\end{minipage}
\end{figure}

Finally, we recall that the estimate \eqref{eq:gbd_SM} on the gradients for UnICORNN \eqref{eq:ucrn_SM} needs the weights to be bounded, see \eqref{eq:gbddef_SM}. One always initializes the training with bounded weights. However, it might happen that the weights explode during training. To check this issue, in  \fref{fig:weights}, we plot the mean and standard deviation of the norms of the hidden weights $\bw^l$ for $l=1,2,3$ during training based on 10 re-trainings of the best performing UnICORNN on the noise padded CIFAR-10 experiment. We can see that none of the norms of the weights explode during training. In fact the weight norms seem to saturate, mostly on account of reducing the learning rate after 250 epochs. Thus, the upper bound \eqref{eq:gbd_SM} can be explicitly computed and it is finite, even after training has concluded.

\end{document}